\newcommand{\inv}{^{\raisebox{.2ex}{$\scriptscriptstyle-1$}}}
\newcommand{\sminus}{\scalebox{0.8}[1.0]{$-$}}
\definecolor{mygreen}{RGB}{85,168,104}  
\definecolor{mygray}{RGB}{120,120,120}
\definecolor{myblue}{HTML}{4A90E2}
\definecolor{myred}{HTML}{D0021B}
\newcommand{\cl}[1]{\!#1\!}
\newcommand{\eq}{\cl{=}}
\newcommand{\clin}{\cl{\in}}
\newcommand{\inR}[1][]{\clin \sR^{#1}}
\newcommand{\inRR}[2]{\clin \sR^{#1\times#2}}
\newcommand{\deriv}[2]{\tfrac{d#1}{d#2}}
\newcommand{\pderiv}[2]{\tfrac{\partial#1}{\partial#2}}
\newcommand{\mTm}[1]{#1^{\top}\!#1}
\newcommand{\keypoint}[1]{\vspace{2pt}\textbf{#1}:\ }
\def\appref#1{Appendix~\ref{#1}}
\def\thmref#1{Thm.~\ref{#1}}
\DeclareMathOperator{\Diag}{Diag}
\def\Figref#1{Fig.~\ref{#1}}                             
\def\Secref#1{§\ref{#1}}
\def\eqref#1{equation~\ref{#1}}
\def\Eqref#1{Eq.~\ref{#1}}                                  
\def\1{\bm{1}}
\def\rx{{\textnormal{x}}}
\def\rz{{\textnormal{z}}}
\def\vr{{\bm{r}}}
\def\vu{{\bm{u}}}
\def\vv{{\bm{v}}}
\def\vz{{\bm{z}}}
\def\mD{{\bm{D}}}
\def\mE{{\bm{E}}}
\def\mH{{\bm{H}}}
\def\mI{{\bm{I}}}
\def\mJ{{\bm{J}}}
\def\mM{{\bm{M}}}
\def\mR{{\bm{R}}}
\def\mS{{\bm{S}}}
\def\mU{{\bm{U}}}
\def\mV{{\bm{V}}}
\def\mW{{\bm{W}}}
\def\mX{{\bm{X}}}
\DeclareMathAlphabet{\mathsfit}{\encodingdefault}{\sfdefault}{m}{sl}
\SetMathAlphabet{\mathsfit}{bold}{\encodingdefault}{\sfdefault}{bx}{n}
\newcommand{\tens}[1]{\bm{\mathsfit{#1}}}
\def\tH{{\tens{H}}}
\def\tL{{\tens{L}}}
\def\gM{{\mathcal{M}}}
\def\gN{{\mathcal{N}}}
\def\gV{{\mathcal{V}}}
\def\gX{{\mathcal{X}}}
\def\gZ{{\mathcal{Z}}}
\def\sR{{\mathbb{R}}}
\newcommand{\E}{\mathbb{E}}
\newcommand{\R}{\mathbb{R}}
\newcommand{\Var}{\mathrm{Var}}
\definecolor{myblue}{RGB}{33,113,181}
\definecolor{myred}{RGB}{200,54,69}
\definecolor{mygreen}{RGB}{58,175,76}
\pgfplotsset{compat=1.18}
\theoremstyle{plain}
\newtheorem{theorem}{Theorem}[section]
\newtheorem{corollary}[theorem]{Corollary}
\theoremstyle{definition}
\newtheorem{definition}{Definition D\!}
  \newtheorem{condition}{Condition C\!} \theoremstyle{remark}
\newtheorem{remark}[theorem]{Remark}
\icmltitlerunning{Disentanglement as Identifiable Pushforward Factorisation}
\begin{document}

\twocolumn[
  \icmltitle{Disentanglement as Identifiable Pushforward Factorisation}


  \icmlsetsymbol{equal}{*}

  \begin{icmlauthorlist}
    \icmlauthor{Carl Allen}{yyy}
  \end{icmlauthorlist}

  \icmlaffiliation{yyy}{Centre for Data Science, \'Ecole Normale Sup\'erieure, Paris, France}

  \icmlcorrespondingauthor{Carl Allen}{carl.allen@ens.fr}

  \icmlkeywords{Machine Learning, ICML}

  \vskip 0.3in
]



\printAffiliationsAndNotice{}  

\begin{abstract}
    We characterise \emph{disentanglement} in smooth generative pushforward models, such as in VAEs and GANs.
    For a generator/decoder $g\cl:\mathcal{Z}\!\to\!\mathcal{X}$ and factorised prior $p(z)\eq\prod_i p_i(z_i)$, we define disentanglement as \emph{factorisation of the pushforward density} $p_\mu\eq g_\#p$ into one–dimensional ``seam'' factors, where each latent dimension controls an independent generative factor of the data. We prove that $p_\mu$ factorises according to the SVD of $g$'s Jacobian; that disentanglement equates to two conditions on $g$ (C1-C2); and that under those conditions the seam factors are \emph{identifiable}, up to permutation and sign.
    In the particular case of Gaussian ($\beta$-)VAEs, we show via an identity how diagonal posteriors promote C1-C2, in expectation, explaining why disentanglement arises modulated by $\beta$.
    Experiments illustrate this mechanism on Gaussian data, dSprites, and CelebA.
    \vspace{-12pt}
\end{abstract}


\section{Introduction}

\label{sec:intro}

\vspace{-4pt}
A generative latent variable model is said to be \textit{disentangled} if varying a single latent co-ordinate changes a single semantic aspect of generated samples, e.g.\ an object's position or the facial expression in an image.
Variational Autoencoders (\textbf{VAE}s, \citet{kingma2013auto, rezende2014stochastic}) and variants \citep{betavae, kim2018disentangling, tcvae} are often observed to disentangle, which is intriguing since VAEs are not knowingly designed to achieve it, and potentially useful, e.g.\ for controlled data generation. 
Related phenomena are observed in Generative Adversarial Networks (GANs) \citep[][]{gan} and diffusion models \citep{stablediff, pandeydiffusevae, zhang2022unsupervised, yang2023disdiff}.

Though lacking a formal definition, disentanglement is often associated with identifying \textit{generative factors} of the data \citep{bengio2013representation}. Hence understanding disentanglement, and how it arises ``for free'' in VAEs, is of interest in many areas of machine learning, to its interpretability and to our very understanding of the data itself. It may also enable data to be disentangled reliably in domains where it is less easy to perceive, e.g.\ gene sequence or protein modelling.

Research into ($\beta$-)VAEs attributes their ability to disentangle to the use of \textit{diagonal} posterior covariances, commonly chosen for computational efficiency \citep{betavae, burgess2018understanding, rolinek2019variational, kumar2020implicit}. 
Approximate relationships suggest that diagonal covariances promote \textit{column-orthogonality} in the decoder's Jacobian, a property empirically associated with disentanglement \citep{gresele2021independent}. 
In GANs, disentangled features have been identified via the SVD of the generator's Jacobian \citep{ramesh2018spectral}, or induced by regularising generator derivatives: \citet{wei2021orthogonal} regularise Jacobian orthogonality,
while \citet{peebles2020hessian} penalise mixed Hessian interactions.
%
We connect these findings via a general principled distribution-level explanation of disentanglement.

Our core results are \emph{model-agnostic} statements general to smooth pushforwards under a deterministic function
$g$: 
\vspace{-4pt}
\begin{itemize}[leftmargin=0.3cm, itemsep=0pt, topsep=0pt]    
    \item a distributional definition of disentanglement in terms of independent factors/components (\Cref{def:disentanglement}), 
\vspace{-2pt}
    \item a canonical factorisation of the pushforward density over the manifold via the SVD of $g$'s Jacobian  (\Cref{thm:seam-factorisation-general}),
\vspace{-2pt}
    \item an \emph{iff} characterisation of disentanglement via two conditions on $g$'s derivatives (C1-C2, \Cref{thm:non_linear}), and 
\vspace{-2pt}
    \item proof that \textit{independent factors are identifiable} up to natural symmetries (\Secref{sec:identifiability}).
\end{itemize}
In the special case of Gaussian ($\beta$-)VAEs, we show from an \textit{exact} identity that diagonal posteriors encourage C1-C2 in aggregate / in expectation over posteriors (\Figref{fig:disentanglement}), modulated by $\beta$ (\Secref{sec:cleaning_up_covariances}).

Overall, we extend prior understanding of disentanglement, and link VAE theory with empirical findings via a rigorous yet intuitive, distribution-level definition of disentanglement, mathematically formalising this well-known phenomenon.


\begin{figure*}[t]
\centering
\input{figures/figure_one}
\vspace{-6pt}
\caption{
\textbf{Disentanglement illustrated for VAEs with full vs diagonal posteriors:} 
Right singular vectors $\vv_i\clin\gZ$ of the decoder's Jacobian (blue arrows) define (\textit{s.v.}) paths in latent space (dashed blue); 
left singular vectors $\vu_i$ (red arrows) define \textit{seams} in data space (dashed red). 
1-D densities over seams \underline{factorise} the manifold density $p_\mu$.
(\textbf{\textit{left}}) For VAEs with full posteriors, s.v.\ paths are arbitrary, hence the image of an axis‑traversal (green) need \emph{not} follow a seam. (\textbf{\textit{right}}) For VAEs with diagonal posteriors, s.v.\ paths are \underline{axis‑aligned}, hence axis traversal images follow seams, over which 1-D densities are \underline{independent} factors of $p_\mu$, achieving \textit{disentanglement}, (D\ref{def:disentanglement}).}
\label{fig:disentanglement}
\vspace{-12pt}
\end{figure*}

\section{Background}
\label{sec:bg}

\vspace{-4pt}

\keypoint{Notation} 
For data $x\clin\gX\cl\doteq\sR^m$ with density $p(x)$, and latent variables $z\clin\gZ\cl\doteq\sR^d$ ($d\cl\leq m$), we consider functions $f:\gZ\to\gX$ that are continuous, injective and differentiable almost everywhere (\textbf{\emph{c.i.d.a.e.}}), with image
$\gM_f \eq \{f(z)\,|\,z\in\gZ\}\subseteq\gX$.
%
%
%
%
If $f$ is differentiable at $z$, $\mJ_z$ denotes its Jacobian evaluated at $z$ ($[\mJ_z]_{ij} \eq \pderiv{x_i}{z_j}$) with singular value decomposition (\textbf{SVD}) $\mJ_z\eq\mU_z\mS_z\mV_z^\top$
    ($\mTm{\mU_z} \eq \mI$,    $\mTm{\mV_z} \eq \mV_z\mV_z^\top\! \eq \mI$).\footnote{
    To lighten notation, explicit dependence of $\mU, \mV, \mS, \vu_i, \vv_i, s_i$ on $z$ is often suppressed where context is clear.}
Let  $s_{i}\cl\doteq\mS_{ii}$ denote the $i^{th}$ singular value, and $\vu_{i}$/$\vv_{i}$ the $i^{th}$ left/right singular vectors (columns of $\mU$/$\mV$).
Where $\mJ_z$ exists and has full column rank, $\gM_f$ is locally a $d$-dimensional embedded manifold  (see \Figref{fig:seams}).
By injectivity, $f$ defines a bijection between $\gZ$ and $\gM_f$.

\vspace{-2pt}
\keypoint{Latent Variable Model (LVM)}
We consider generative models with parameters $\theta$ and independent latent variables:
\vspace{-4pt}
\begin{align}
\label{eq:lvm}
p_\theta(x) \eq \int_z p_\theta(x|z)p(z),\quad p(z)\eq\prod_ip(z_i)\ .
\end{align}

\vspace{-12pt}
The \textbf{generative function} $g \cl: \gZ \cl\to \gX$  is the map  $g(z)\eq\E[x|z]$ from latent space to the \textbf{mean manifold} $\gM_g\eq\{g(z)\}\cl\subseteq\gX$, with push-forward density $p_{\mu} \cl\doteq g_{\#} p_z$ (\textbf{manifold density}). \Eqref{eq:lvm} admits the deterministic case $p_\theta(x|z) \eq \delta_{x-g(z)}$, where $p_\theta(x) \eq p_\mu(x)\mathds{1}_{x\in\gM_g}$ (\textbf{deterministic LVM}).

\vspace{-2pt}
\keypoint{Variational Autoencoder (\textbf{VAE})}
A VAE learns parameters of an LVM by maximising the ELBO (where $\beta \eq1$),
\vspace{-4pt}
\begin{align}
    \label{eq:elbo}
    \!\!\int_x p(&x)\log p_\theta(x)
        \nonumber\\  \geq \ &   
        \int_x \!p(x) \!\int_z\! q_\phi(z|x)
            \Big(\!
                \log p_\theta(x|z)  - \beta\log\tfrac{q_\phi(z|x)}{p(z)} 
            \!\Big).
\end{align}

\vspace{-10pt}
A \textit{decoder} network $d(z)$ parameterises likelihoods $p_\theta(x|z)$;
an \textit{encoder} network parameterises Gaussian approximate posteriors $q_\phi(z|x) = \gN(z;e(x),\Sigma_x)$, with \textit{diagonal} covariances $\Sigma_x$. The prior $p(z)$ is standard Gaussian.
We call a VAE with Gaussian likelihoods $p_\theta(x|z) \cl\doteq \gN(x; d(z),\sigma^2\mI)$ a \textbf{Gaussian VAE}. A Gaussian VAE with a linear decoder $d(z) \eq \mD z,\,\mD\inRR{m}{d}$, is called a  \textbf{linear VAE (LVAE)}.

\keypoint{Generative Adversarial Network (\textbf{GAN})} A GAN learns a deterministic LVM parameterised by a \textit{generator} network $g$, by minimising the Jensen-Shannon divergence between $p_\theta(x)$ and $p(x)$, as approximated by a classifier.

\vspace{-2pt}
\keypoint{Disentanglement} While not formally defined, disentanglement refers to when a trained model identifies semantic features with distinct latent variables $z_i$, such that varying a latent co-ordinate causes generated samples to differ by a single feature \citep[][]{bengio2013representation, betavae, ramesh2018spectral, rolinek2019variational}.
%
This appears to relate closely to \textit{Independent Component Analysis} (\textbf{ICA}), which aims to identify \textit{statistically independent components} of data generated under an LVM (\Eqref{eq:lvm}).

Surprisingly, VAEs ($\beta \eq 1$) often exhibit disentanglement, and even more so with $\beta\cl>1$ \citep[$\bm{\beta}$\textbf{-VAE},][]{betavae, burgess2018understanding}, although with reduced generative quality, e.g.\ blurrier images. A standard GAN does not automatically disentangle, but can be induced to by diagonalising generator derivatives \citep{peebles2020hessian, wei2021orthogonal}. We theoretically justify and unify these observations.

\label{sec:ppca}
\vspace{-2pt}
\textbf{Linear LVM}:\ Linear LVMs are studied in
Probabilistic PCA (PPCA) \citep{ppca}, i.e.\ \Eqref{eq:lvm} where
\vspace{-2pt}
\begin{align}
    \label{eq:ppca_assumps}
    \!\!\!p(x|z) = \gN(x; \mW z, \sigma^2\mI),\ \quad\ 
    p(z) &= \gN(z;\,\bm{0},\mI)\,,
\end{align}

\vspace{-10pt}
$\mW \!\inRR{m}{d}$, $\sigma \inR$.\footnote{
    For simplicity we assume that data is centred, equivalent to including a mean parameter \citep{ppca}.}
Here, posteriors $p(z|x)$ and MLE parameters $\mW_*$ are known analytically:
\vspace{-2pt}
\begin{gather}
    \!\! p(z|x) 
    %
    %
    %
    %
    = \gN(z; \tfrac{1}{\sigma^2}\mM \mW^\top\! x,  \mM),
    \ \     
    \mM\inv\! \eq \mI \cl+ \tfrac{1}{\sigma^2}\mTm{\mW}
    \nonumber\\ 
        \mW_* = \mU_{\!\mathsmaller\mX}(\mathbf{\Lambda}_{\!\mathsmaller\mX} \,\sminus\, \sigma^2\mI)^{^1\!/\!_2}\mR
    \label{eq:ppca-posterior}
\end{gather}

\vspace{-10pt}
where $\mathbf{\Lambda}_{\!\mathsmaller\mX} \!\inRR{d}{d}$, $\mU_{\!\mathsmaller\mX} \!\inRR{m}{d}$ contain the largest eigenvalues and respective eigenvectors of data covariance $\mX\mX^\top\!$; and $\mR \inRR{d}{d}$ is orthonormal ($\mTm{\mR} \eq\mI$). As $\sigma^2 \!\to 0$, $\mW_*$ approaches the SVD of the data matrix 
    $\mX \! \eq \mU_{\!\mathsmaller\mX}\mathbf{\Lambda}_{\!\mathsmaller\mX}^{^{_\mathsmaller{_1\!}}\!\mathsmaller{/}\!_\mathsmaller{2}}\mV_{\!\mathsmaller\mX}^\top\! \inRR{m}{n}\!$, up to $\mV_{\!\mathsmaller\mX}$ (classical PCA). 
The model is \textit{unidentified} since $\mW_*$ has uncountably infinite solutions (due to $\mR$).

\begin{figure*}[t!]
    \vspace{-4pt}
    \centering
    \begin{subfigure}{.35\textwidth}
        \centering
        \hspace{-20pt}
        \includegraphics[width=0.85\linewidth]{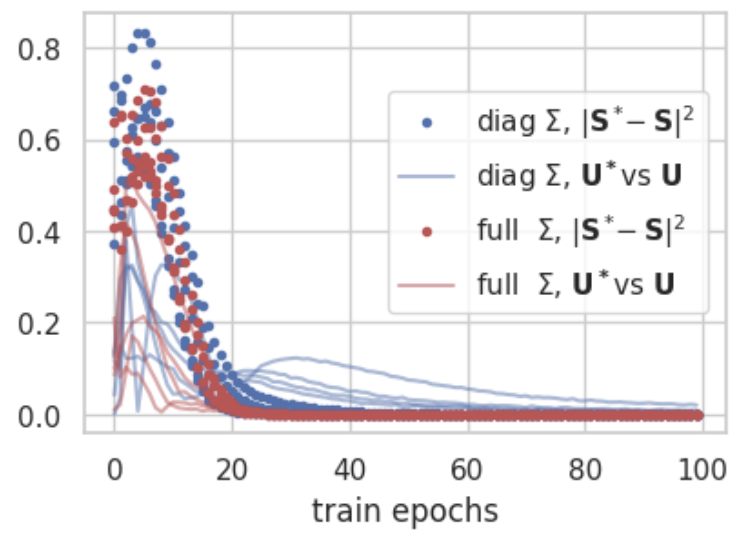}
    \end{subfigure}%
    \begin{subfigure}{.35\textwidth}
        \centering
        \hspace{-30pt}
        \includegraphics[width=0.9\linewidth]{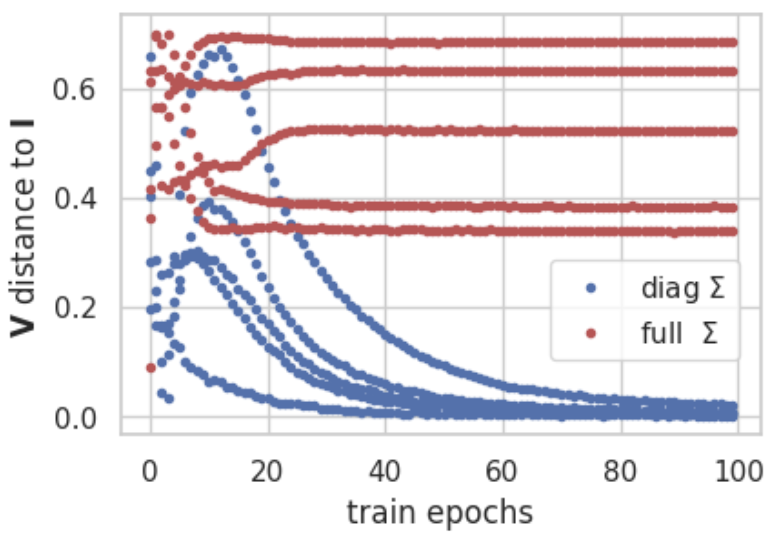}
    \end{subfigure}%
    \begin{subfigure}{.3\textwidth}
        \centering
        \hspace{-30pt}
        {\includegraphics[width=0.9\linewidth]{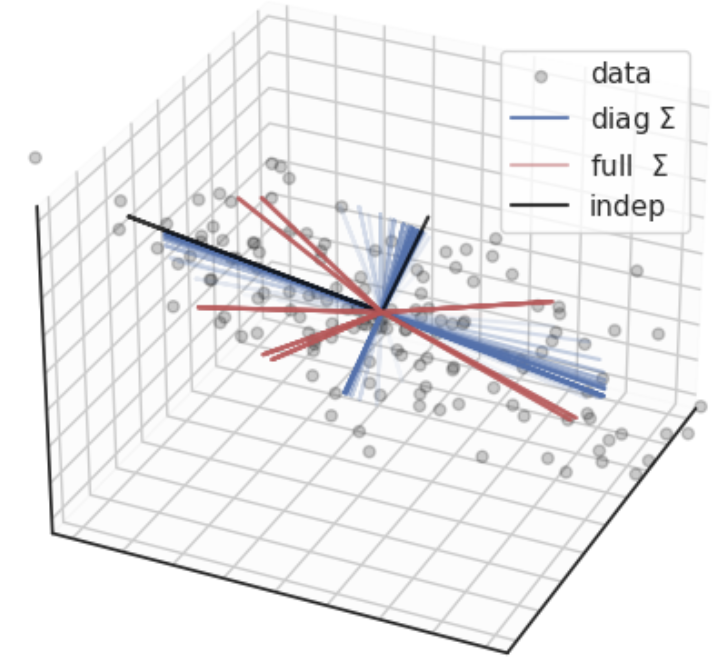}}
    \end{subfigure}%
    \vspace{-4pt}
    \caption{
        \textbf{An LVAE breaks rotational symmetry.} 
        (\textit{\textbf{l}}) 
            both full and diagonal posterior VAEs fit the data, learning ground truth parameters $\mU$, $\mS$ (all losses $\cl\to 0$);
        (\textit{\textbf{c}}) 
            only with diagonal $\Sigma$ do right singular vectors $\vv^i$ of $\mD$ align with standard basis vectors $\vz_i$ ($\mV\cl\to\mI$, blue plots); 
        (\textit{\textbf{r}}) 
                full-$\Sigma$ VAEs map latent axes $\vz_i$ to arbitrary directions (red), 
                but diagonal-$\Sigma$ VAEs learn (later epochs darker) to map $\vz_i$ to independent components of the data (black, i.e.\ blue $\to$ black).
    }
    \label{fig:linear_results}
    \vspace{-10pt}
\end{figure*}

\vspace{-2pt}
Given the same linear LVM, an LVAE
 approximates the solution by maximising the ELBO with $p_\theta(x|z) \eq \gN(x;\mD z,\sigma^2\mI)$ and $q_\phi(z|x) \eq \gN(z;\mE x,\Sigma)$.
Interestingly, an LVAE with  \textit{diagonal} $\Sigma$ breaks the rotational symmetry of PPCA \citep{lucas2019don}. From \Eqref{eq:ppca-posterior} we have
\vspace{-4pt}
\begin{align}
    \label{eq:breaking_symmetry_linear}
    \Sigma_{*}
    \, \overset{(\ref{eq:ppca-posterior}\textit{i})}=\  
        (\mI \cl+ \tfrac{1}{\sigma^2}\mW_*^\top\!\mW_*)\inv
    \ \overset{(\ref{eq:ppca-posterior}\textit{ii})}=\   \sigma^2\mR^\top\!\mathbf{
    \Lambda}_{\!\mathsmaller\mX}\inv\mR
    \ ,
\end{align}

\vspace{-10pt}
so for $\Sigma$ to be both optimal \textit{and} diagonal, $\mR$ must belong to a finite set of signed permutations, thus the optimal decoder $\mD_*\! \eq\, \mU_{\!\mathsmaller\mX}(\mathbf{\Lambda}_{\!\mathsmaller\mX} \sminus\, \sigma^2\mI)^{^1\!/\!_2}$
is unique \textit{up to permutation and sign} (\textbf{P\&S}) (see \Figref{fig:linear_results}).
We claim that this result  of diagonal covariances is in fact \textit{disentanglement} in the linear case.

\section{Disentanglement}\label{sec:disentanglement}
\vspace{-4pt}


We define disentanglement in terms of statistical independence. Random variables/co-ordinates $x_i$ are independent \textit{iff} their joint density \textit{factorises}, e.g.\ $p(x)\eq \prod p(x_i)$. Similarly, a pushforward density over a manifold $p_\mu$ may factorise as a product of independent 1‑D factors over the manifold (rather than ambient co-ordinates $x_i$). We say $p_{\mu}$ is disentangled when such a factorisation exists and each factor maps to/from a distinct latent $z_i$ (\textit{axis-alignment}), capturing the notion that changing one factor leaves the others unaffected.

\begin{definition}[\textbf{Disentanglement}]\label{def:disentanglement}

For $g\cl:\gZ\to\gX$ \emph{c.i.d.a.e.}, we say $p_\mu$ is \emph{disentangled} at $z\clin\gZ$
if 
there exist  1-D densities $\{f_i\}$ such that $p_\mu$ \underline{factorises} as
\vspace{-7pt}
\begin{equation}
    \label{eq:disentanglement}
    p_\mu\big(g(z)\big) = \prod_{i=1}^d f_i\big(u_i(z)\big)\,,
\end{equation}

\vspace{-12pt}
where: 
(a) each factor $f_i$ is the \emph{1-D push-forward} of $p(z_i)$ along the \underline{axis-aligned} line obtained by moving in the $i^{th}$ latent coordinate $z_i$ (with all others fixed); 
(b) $u_i$ is the co-ordinate of $g(z)$ along the image of that line; and  
(c) random variables $\{u_i(z)\}$ are mutually \underline{independent} under $z\cl\sim p(z)$.
\end{definition}

\vspace{-2pt}
To illustrate we consider the linear case: fitting an LVAE with decoder $\mD$ and diagonal posterior covariance to data generated by a linear LVM (\Eqref{eq:ppca_assumps}, $\mW\eq\mU\mS\mV^\top$) where $p(x)\eq \mathcal{N}(x;0,\mW\mW^\top\!+\sigma^2\mI)$.
%
From \Secref{sec:bg}, the data covariance 
and diagonal posteriors determine the optimal decoder $\mD_* \eq \mU\mS$ (up to P\&S). The claim is that the decoder $\mD_*$ disentangles $p_\mu \eq \mathcal{N}(0,\mW\mW^\top)$ per \Cref{def:disentanglement}.

\vspace{-2pt}
As a Gaussian, $p_\mu$ {factorises} as a product of independent 1-D Gaussians along eigenvectors $\vu_i$ of its covariance
    $\mW\mW^\top\!\eq \mU\mS^2\mU^\top\!$:\ 
    $p_\mu(x)\eq \prod_i \gN(u_i; 0,s_{i}^2)$, 
for \textit{feature co-ordinates}
    $u_i \cl\doteq \vu_i^{\top}\!x \clin\sR$. 
If $x\eq\mD_* z$,  
then $u_i 
\eq \vu_i^{\top}\!\mD_* z \eq
s_i z_i$, hence
\vspace{-10pt}
\begin{itemize}[itemsep=0pt, leftmargin=0.6cm]
    \item $p_\mu$ \underline{factorises} as a product of \underline{\smash{independent}} push-forward densities 
        $f_i(u_i)\eq \gN(u_i; 0,s^{i2})$ (as in \Eqref{eq:disentanglement}); and
    \vspace{-2pt}
    \item the decoder maps each \underline{\smash{axis-aligned}} direction $z_i$ to a distinct feature co-ordinate $u_i$ with density $f_i$,
\end{itemize}
\vspace{-7pt}
satisfying D\ref{def:disentanglement}.
As expected, synthetic samples $x\eq\mD z$ generated by re-sampling $z_i$, holding $z_{j\ne i}$ constant, differ in a single principal component, or feature, $u_i$.

\keypoint{Dropping diagonality} 
To highlight the role of \textit{diagonal} posteriors we consider \textit{full} posterior LVAEs where $\mR\cl\ne\mI$ in general (\Eqref{eq:ppca-posterior}). The above argument follows except that (a) columns $\vr_i\clin\gZ$ of $\mR$ map to independent $\vu_i$ directions in $\gX$; and (b) standard basis vectors in $\gZ$ map to $\vu_i^{\top}\!\mR$ directions, arbitrary with respect to independent features along $\vu_i$. Axis-aligned traversals in latent space thus correspond to varying \textit{entangled} combinations of (not distinct) features. 
This is shown empirically in \Figref{fig:linear_results} (see caption for details).

\section{Disentanglement $\Leftrightarrow$ Decoder Constraints}
\label{sec:orthog_to_disent}
\vspace{-4pt}


The linear case shows that diagonal covariances cause disentanglement (D\ref{def:disentanglement}) by constraining right singular vectors of the decoder.
To extend this to \textit{non-linear} pushforwards, we consider in more detail the generative function's Jacobian.

\vspace{-2pt}
For a Jacobian $\mJ_z \eq \mU\mS\mV^\top\!$, singular vectors $\vv_i$, $\vu_i$ (columns of $\mV,\,\mU$)
respectively define local orthonormal bases for $\gZ$ at $z$ (\textbf{$\bm{\mV}$-basis}) and the tangent space to $\gM_g$ at $x\eq g(z)$ (\textbf{$\bm{\mU}$-basis}). 
If $v\cl\doteq\mV^\top \!z$ and  $u\cl\doteq \mU^\top \!x$ denote  a point $z$ and its image $x\eq g(z)$ in those bases, the chain rule gives intuition to the Jacobian SVD,  $\mJ_z \eq \mU\mS\mV^\top\! \eq \pderiv{x}{u}\pderiv{u}{v}\pderiv{v}{z}$:
    $\mU$ and $\mV^\top\!$ are local co-ordinate systems, and 
    $\mS \eq \deriv{u}{v}$ is the Jacobian of a map  $v\cl\mapsto u$ in those co-ordinates, under which \textit{only respective dimensions interact} ($\tfrac{\partial u_i}{\partial v_j} \eq 0$, $i\cl\ne j$).

\begin{figure*}[!t]
    \centering
    \vspace{-10pt}
        \includegraphics[width=0.5\linewidth]{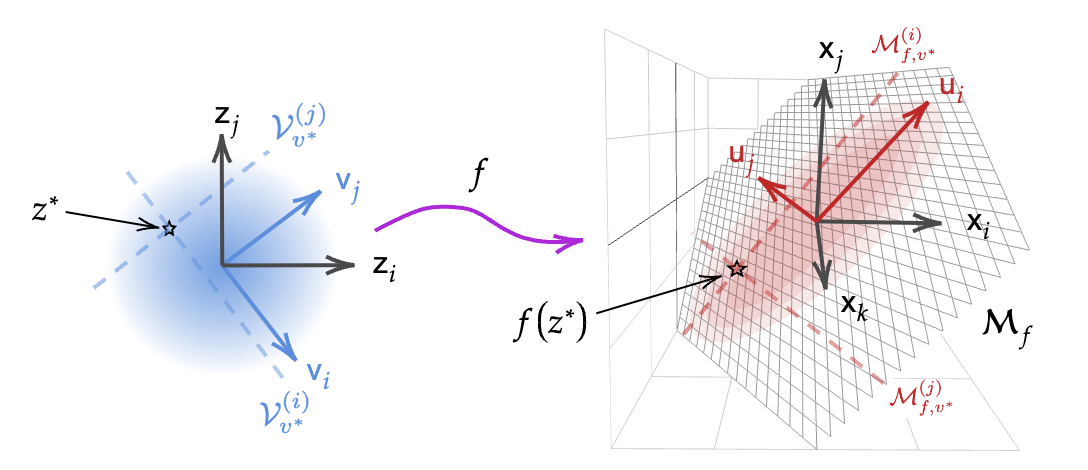} 
    \hfill
        \includegraphics[width=0.47\linewidth]{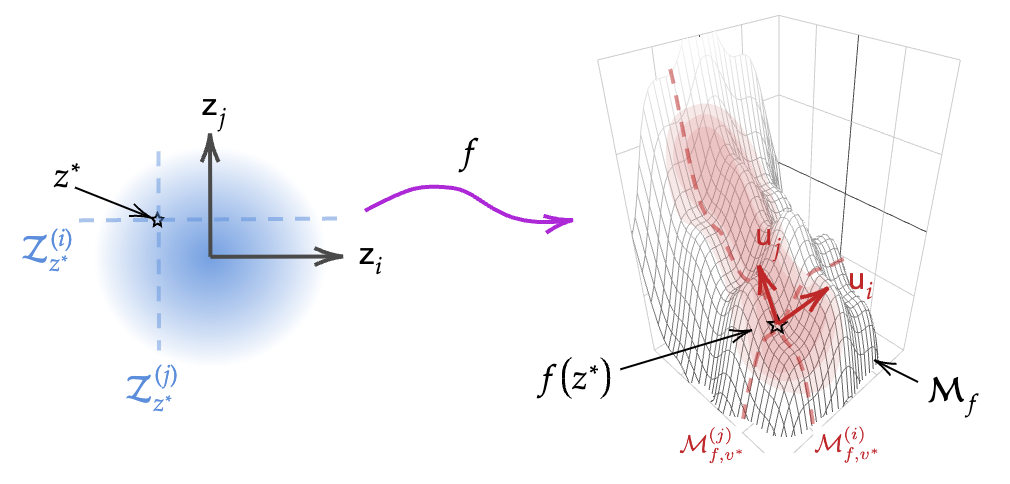} 
    \vspace{-4pt}
    \caption{
        \textbf{Pushing forward $\bm{p(z)}$, from singular vector paths to seams}:
    %
        %
        1-D marginals $p_i(z_i)$ over \emph{s.v.\ paths} $\gV_{z^*}^i$ (\textcolor{myblue}{dashed blue}) factorise $p(z)$; and push-forward to 1-D seam densities over \emph{seams} $\gM_{f,z^*}^i$ (\textcolor{myred}{dashed red}) that factorise $p_\mu$ (\Cref{thm:seam-factorisation-general}).
        (\textit{\textbf{l}}) 
            For linear $f$ \textit{without} \textbf{C1} (e.g.\ full-$\Sigma_x$ LVAE),
            $\gV_{z^*}^i$ are straight lines but need not axis-align (as observed in \Figref{fig:linear_results}(\textit{r})).
        (\textit{\textbf{r}}) 
            For c.i.d.a.e.\ $f$ satisfying \textbf{C1-C2}, $\gV_{z^*}^i$ are \underline{\smash{axis-aligned}} (by C1) and seam densities are \underline{\smash{independent}} components (by C2) that \underline{factorise} $p_\mu$, as required for disentanglement (D\ref{def:disentanglement}).
    }
    \vspace{-10pt}
    \label{fig:main_fig}
\end{figure*}

\vspace{-2pt}
\keypoint{Singular Vector Paths and Seams}
Directional derivatives $\mJ_z\vv^{i} \eq \mU\mS\mV^\top\!\vv^{i} \eq s^{i}\vu^{i}$
show that a small perturbation by a right singular vector $\vv^{i}$ at $z\clin\gZ$ maps under $g$ to a small perturbation in direction $\vu^{i}$ at $g(z)\clin\gM_g$. 
By extension, if a path in $\gZ$ follows $\vv^i$ at each point (as a vector field) its image on $\gM_g$ is a path following $\vu^i$. 
(Note that where the Jacobian is continuous, its SVD components are also.) Since columns of an SVD can be permuted, an order must be
fixed for paths over singular vectors to be well defined:

\begin{definition}[\textbf{Regular set and continuous SVD}]
\label{def:regular-set}
For \emph{c.i.d.a.e.}  $g:\gZ\to\gX$, the \emph{regular set} is defined as
\vspace{-4pt}
\begin{align*}
    \gZ_{\mathrm{reg}}
    \;\doteq\;
    \big\{z\clin\gZ \;\big|\; \mJ_z &\text{ exists, has full column rank, }
    \\ 
    &\text{and } s_{1}(z)>\cdots>s_{d}(z)>0 \big\}.
\end{align*}
Vector fields $z\mapsto \vv_{i}(z)$, $z\mapsto \vu_{i}(z)$ and singular values $z\mapsto s_{i}(z)$ can be made continuous on each connected component of $\gZ_{\mathrm{reg}}$ by fixing the SVD $\mJ_z\eq\mU_z\mS_z\mV_z^\top$.\footnote{
    e.g.\ start from an arbitrary point, order singular values strictly decreasing and choose signs continuously.
    }\footnote{
    Restricting to $\gZ_{\mathrm{reg}}$ only excludes points where $\mJ_z$ is undefined or has repeated singular values; these edge cases can be avoided without affecting the results. All statements are made on a fixed connected component of $\gZ_{\mathrm{reg}}$.
    }
\end{definition}


Paths following $i$-th singular vectors can now be defined: 
    \textbf{singular vector} (\textbf{s.v.}) \textbf{path} $\gV^i_{z}$ (Def.~\ref{def:sv-path}) follows $\vv_i$ in $\gZ$;
        and 
    \textbf{seams}, $\gM^i_{g,z}$ (Def.~\ref{def:seams})  follow $\vu_i$ on the manifold. 

\begin{figure}[!h]
    \vspace{-10pt}
    \centering
    \subfloat{\includegraphics[width=0.87\linewidth]{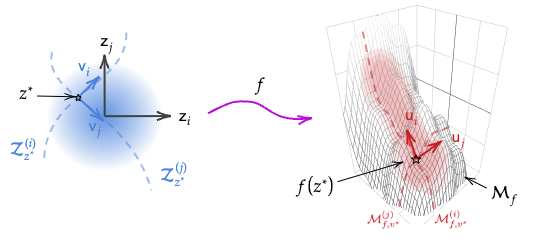}}%
    
    \vspace{-5pt}
    \captionsetup{font=small}
    \caption{
        \textbf{Seam factorisation}:
        For c.i.d.a.e.\ $f\cl:\gZ\cl\to\gX$,
        with manifold $\gM_f\cl\subseteq\gX$, 
        \textit{s.v.\ paths} $\gV^k_{z^*}\cl\subseteq\gZ$ (\textcolor{myblue}{dashed blue})  following right singular vectors $\vv_{k}\!$ of Jacobian $\mJ_{z^*\!}$ at $z^*$ (\textcolor{myblue}{solid blue}),
        map to \textit{seams} $\gM_{f,z^*}^k\cl\subseteq \gM_f$  (\textcolor{myred}{dashed red}) following left singular vectors $\vu_{k}$ at $f(z^*)\clin\gX$ (\textcolor{myred}{solid red}). $\vz_k\clin\gZ$ are standard basis vectors.
        }
    \vspace{-4pt}
    \label{fig:seams}
\end{figure}

By construction, $g$ maps s.v.\ paths to seams (\Cref{lem:seams}),
hence 1-D densities over s.v.\ paths map  to 1-D pushforwards over seams. Thus if s.v.\ paths follow latent axes, and seam densities are independent factors of $p_\mu$, we have disentanglement (D\ref{def:disentanglement}). This occurs under two conditions:

\begin{condition}
    Right singular vectors $\mV_z$ of $\mJ_z$ are standard basis vectors, i.e.\ after axis relabeling/sign flips, $\mV_z=\mI$.
\end{condition}
\begin{condition}
    The matrix of partial derivatives of singular values $(\pderiv{s_i}{z_j})_{i,j}$ is diagonal, i.e.\ $\pderiv{s_i}{z_j}\!\eq 0$ for all $i\cl\neq j$.
\end{condition}

\vspace{-2pt}
\begin{restatable}[\textbf{Factorisation over seams}]{lemma}{thmseamfactorisation}
\label{thm:seam-factorisation-general}
Let $g\cl:\gZ\!\to\!\gX$ be \emph{c.i.d.a.e.}\  and the prior factorise as $p(z)=\prod_{i=1}^d p_i(z_i)$.
Then, 
the manifold density $p_\mu$ on $\gM_g$
factorises as
\vspace{-8pt}
\begin{equation}
    \label{eq:pf-general-product}
    p_\mu\!\big(g(z)\big)
    \;=\;
    \prod_{i=1}^d \tfrac{p_i(z_i)}{s_{i}(z)}\,,
    \quad 
    \text{for every } z\clin\gZ_{\mathrm{reg}}.
    \vspace{-8pt}
\end{equation}
Moreover, under C1, each factor $
\tfrac{p_i(z_i)}{s^{i}(z)}$ is the 1-D density  over  the $i$-th seam $\gM^i_{g,z}$ at $x\eq g(z)$, obtained by pushing forward the 1-D marginal $p_i(z_i)$ over $\gV^i_{z}$, the $i$-th s.v.\ path though $z$ parallel to the $i^{\text{th}}$ latent axis.
\end{restatable}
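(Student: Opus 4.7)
The formula is a direct consequence of the area (push-forward) formula for densities under an injective immersion, combined with the SVD of $\mJ_z$ and the prior factorisation. I would proceed in three stages.

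\textbf{Step 1 (area formula).} For a \emph{c.i.d.a.e.}\ injective $g\cl:\gZ\to\gX$ and $z\clin\gZ_{\mathrm{reg}}$, parameterise $\gM_g$ locally by $z$ via $g$. The push-forward of $p(z)$ to the Hausdorff measure on $\gM_g$ satisfies
\begin{equation*}
    p_\mu\!\big(g(z)\big) \;=\; \frac{p(z)}{\sqrt{\det(\mJ_z^\top \mJ_z)}}\,,
\end{equation*}
which is standard once $\mJ_z$ has full column rank (guaranteed on $\gZ_{\mathrm{reg}}$). I would appeal to this formula rather than rederive it.

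\textbf{Step 2 (SVD reduction).} Using $\mJ_z=\mU_z\mS_z\mV_z^\top$ and $\mV_z^\top\mV_z=\mI$, one immediately gets $\mJ_z^\top\mJ_z=\mV_z\mS_z^2\mV_z^\top$, hence $\sqrt{\det(\mJ_z^\top\mJ_z)}=\prod_{i=1}^d s^{i}(z)$. Combining this with $p(z)=\prod_i p_i(z_i)$ yields
\begin{equation*}
    p_\mu\!\big(g(z)\big)\;=\;\prod_{i=1}^d \frac{p_i(z_i)}{s^{i}(z)}\,,
\end{equation*}
establishing the main identity.

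\textbf{Step 3 (seam interpretation).} For the ``moreover'' claim I would argue pointwise at $z^*\clin\gZ_{\mathrm{reg}}$. The s.v.\ path $\gV^i_{z^*}$ is an integral curve of the unit vector field $\vv^{i}$ through $z^*$; its image under $g$ is the seam $\gM^i_{g,z^*}$ (by \Cref{lem:seams}). A unit step along $\gV^i_{z^*}$ at $z^*$ maps to a step of length $\|\mJ_{z^*}\vv^{i}\|=s^{i}(z^*)$ along $\gM^i_{g,z^*}$ at $g(z^*)$, so arc length on the seam is $s^{i}$ times arc length on the path. Identifying the 1-D prior density on $\gV^i_{z^*}$ induced by $p_i(z_i)$ with its local marginal along $\vv^{i}$, the change-of-variables $t\mapsto s^{i}(z^*)\,t$ converts a density $p_i$ into $p_i/s^{i}$ on the seam.

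\textbf{Expected obstacle.} The clean identification in Step~3 uses a subtle point: the marginal $p_i(z_i)$ is defined along the $z_i$-axis, whereas $\gV^i_{z^*}$ is tangent to $\vv^{i}(z^*)$, which is generally not $\vz_i$. To make the push-forward statement unambiguous in full generality I would either (i) restrict the ``moreover'' to the regime where C1 holds (so $\vv^{i}\!\eq\!\vz_i$ and the s.v.\ path is the $z_i$-axis line through $z^*$, making the interpretation verbatim), or (ii) exploit the rotational symmetry of a factorising isotropic prior (e.g.\ standard Gaussian): under an orthogonal relabelling into the $\mV_{z^*}$-basis the prior still factorises, so the local 1-D marginal along $\vv^{i}$ agrees with $p_i$ up to coordinate reparameterisation at $z^*$. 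Making this rigorous in a neighbourhood (rather than at a single point) is the only non-routine piece; the algebraic factorisation itself is immediate.
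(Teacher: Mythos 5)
Your proof is correct and follows essentially the same route as the paper: the area formula combined with $\det(\mJ_z^\top\mJ_z)^{1/2}=\prod_i s^{i}(z)$ gives the factorisation, and the arc-length rescaling by $s^{i}$ along the seam gives the ``moreover'' part. The obstacle you flag in Step~3 is genuine, but the paper sidesteps it by asserting the seam-density identification only pointwise at $t=0$ (where $[z^i_0]_i=z_i$ exactly), so no fix beyond what you already propose is required.
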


\vspace{-14pt}

\proof{See \appref{app:proof_seam_factorisation} (by standard change-of-variables).}

\Cref{thm:seam-factorisation-general} states that $p_\mu$ decomposes as a product of factors, which, under C1, are 1-D densities over seams, each the push–forward of the marginal over an \textit{axis-aligned} latent s.v.\ path. This satisfies two requirements of disentanglement under D\ref{def:disentanglement}, leaving only that factors must also be 
\textit{independent}, i.e.\ only factor $i$ changes over seam $i$, which requires 
\textbf{C2}.

\label{sec:lin_to_stat}

\label{sec:non-linear}

\begin{restatable}[\textbf{Disentanglement $\bm{\Leftrightarrow}$ C1-C2}]{theorem}{thmnonlinear}
\label{thm:non_linear}
Let $g\cl:\gZ\!\to\!\gX$ be \emph{c.i.d.a.e.}\  and the prior factorise as $p(z)=\prod_{i=1}^d p_i(z_i)$.
The push-forward density $p_\mu$ on the manifold $\gM_g$
is \emph{disentangled} (D\ref{def:disentanglement}) if and only if $g$ satisfies \textbf{C1} and \textbf{C2} \textit{a.e.}.
\end{restatable}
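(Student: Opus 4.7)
My plan is to derive both directions from \Cref{thm:seam-factorisation-general}, which already gives the pointwise identity
\begin{equation*}
p_\mu(g(z)) = \prod_{i=1}^d \tfrac{p_i(z_i)}{s^{i}(z)},
\end{equation*}
with each factor identified as the 1-D push-forward of $p_i(z_i)$ along the $i$-th singular-vector path $\gV^i_z$ and onto the $i$-th seam $\gM^i_{g,z}$. Disentanglement (D\ref{def:disentanglement}) adds three structural features on top of this factorisation: the underlying latent curves are axis-aligned, each factor is univariate in $z_i$, and the image coordinates $u_i(z)$ are mutually independent.

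\emph{Forward direction (\textbf{C1}, \textbf{C2} $\Rightarrow$ D\ref{def:disentanglement}).} From \textbf{C1} I have $\vv^i(z) = \ve_i$ a.e., so integrating this vector field identifies $\gV^i_z$ with the latent line $\{z + t\ve_i\}$; its image under $g$, the $i$-th seam, is therefore the image of an axis-aligned line, giving requirement (i) of D\ref{def:disentanglement}. From \textbf{C2} I have $s^i(z) = s^i(z_i)$, so $p_i(z_i)/s^i(z_i)$ is a function of $z_i$ alone. Taking $u_i(z)$ as arc length along the $i$-th seam (which depends only on $z_i$ since the axis-aligned line does) gives $du_i/dz_i = s^i(z_i)$, and the 1-D push-forward formula yields $f_i(u_i(z)) = p_i(z_i)/s^i(z_i)$. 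Mutual independence is immediate since each $u_i$ is a function of only the single independent coordinate $z_i$.

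\emph{Reverse direction (D\ref{def:disentanglement} $\Rightarrow$ \textbf{C1}, \textbf{C2}).} From D\ref{def:disentanglement}, combining the axis-aligned parametrisation with mutual independence, $u_i(z)$ depends only on $z_i$, so each $f_i(u_i(z))$ is a function of $z_i$ alone. Equating the two factorisations,
\begin{equation*}
\prod_{i=1}^d \tfrac{p_i(z_i)}{s^{i}(z)} = \prod_{i=1}^d f_i(u_i(z)),
\end{equation*}
the left-hand factor $i$ varies along the $i$-th seam (direction $\vu^i$ at $g(z)$) while the right-hand factor $i$ varies along the image of the $i$-th axis-aligned line (tangent direction $\mJ_z\ve_i$). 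Since both are multiplicative decompositions of the same manifold density through every $g(z)\clin\gM_g$, the two foliations of $\gM_g$ into 1-D curves must coincide up to relabelling. Matching tangents, $\mJ_z\ve_i = \mU\mS\mV^\top\ve_i$ must be parallel to some $\vu^{\pi(i)}$, which forces $\mS\mV^\top\ve_i = s^{\pi(i)}\ve_{\pi(i)}$ and hence $\mV_z = \mI$ after relabelling and sign flips, i.e.\ \textbf{C1}. With \textbf{C1} in hand, equating factor-by-factor reduces to $p_i(z_i)/s^i(z) = f_i(u_i(z_i))$, so $s^i(z)$ must depend on $z_i$ alone, yielding \textbf{C2}.

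The main obstacle is the uniqueness-of-factorisation step in the reverse direction: two multiplicative decompositions of a common density into 1-D factors along potentially different 1-D foliations of $\gM_g$ must have matching foliations, up to labelling. I would establish this by tangent matching on the regular set $\gZ_{\mathrm{reg}}$, where $\vv^i$, $\vu^i$ and $s^i$ vary smoothly and both foliations are globally well defined; a clean way to formalise it is to differentiate $\log p_\mu(g(z))$ along directions that annihilate all but one factor on each side, and observe that these directions must coincide. Points outside $\gZ_{\mathrm{reg}}$ (repeated or vanishing singular values, singularities of $\mJ_z$) form a measure-zero set absorbed by the ``almost everywhere'' qualifier of the statement.
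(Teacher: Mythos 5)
Your forward direction is essentially the paper's own argument: \textbf{C1} makes the $i$-th s.v.\ path the axis-aligned line, \textbf{C2} makes $s^i$ (hence the seam coordinate $u_i$) a function of $z_i$ alone, and the seam factorisation of \Cref{thm:seam-factorisation-general} then delivers D\ref{def:disentanglement} directly. That half is fine.

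The reverse direction is where you diverge from the paper, and where there is a genuine gap. You equate the SVD/seam factorisation $\prod_i p_i(z_i)/s^i(z)$ with the D\ref{def:disentanglement} factorisation $\prod_i f_i(u_i(z))$ and assert that ``the two foliations of $\gM_g$ into 1-D curves must coincide up to relabelling.'' You flag this yourself as the main obstacle, but the proposed fix (tangent matching by differentiating $\log p_\mu$ along annihilating directions) does not close it: a manifold density can admit \emph{multiple} genuinely different decompositions into 1-D factors along different foliations. The paper's own \Cref{thm:intrinsic-uniqueness} proves uniqueness of the seam foliation only under the extra hypothesis that the on-manifold Hessian of $\log p_\mu$ has pairwise distinct eigenvalues --- an assumption that \Cref{thm:non_linear} does \emph{not} make (restricting to $\gZ_{\mathrm{reg}}$ controls the singular values of $\mJ_z$, not the spectrum of $\nabla_x^2\log p_\mu$; the two coincide only for a Gaussian prior, whereas the theorem allows any factorised prior). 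So your argument either needs that additional non-degeneracy hypothesis imported into the theorem, or a different route. The paper takes the different route: it argues directly from the independence clause of D\ref{def:disentanglement} that only $f_i$ may vary along the image of the $i$-th axis-aligned line, so the tangents $\mJ_z\ve_j$, $j\neq i$, must be orthogonal to $\mJ_z\ve_i$, giving diagonal $\mJ_z^\top\mJ_z$ (\textbf{C1}) without ever invoking uniqueness of the factorisation; \textbf{C2} then follows from $f_i\circ u_i$ depending on $z_i$ alone. A secondary soft spot: your claim that ``$u_i(z)$ depends only on $z_i$'' follows from ``combining the axis-aligned parametrisation with mutual independence'' is itself something to be derived (mutual independence of the random variables $u_i(z)$ under $p(z)$ does not immediately give functional dependence on a single coordinate), and your final ``equating factor-by-factor'' step only yields that $\sum_i\log s^i(z)$ is additively separable, not that each $\log s^i$ depends on $z_i$ alone, unless you first pin the factors down individually.
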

\vspace{-14pt}
\proof{See \appref{sec:non-linear-disentanglement}}

\thmref{thm:non_linear} means that \textbf{C1-C2} are \textit{precisely} the constraints needed for the canonical \underline{factorisation} (\Eqref{eq:pf-general-product}) to satisfy \Eqref{eq:disentanglement} and yield disentanglement: 
\textbf{C1} causes s.v.\ paths to \underline{\smash{axis-align}}; and
\textbf{C2} rules out factor $i$ varying in co-ordinates $z_{j}, j\cl\ne i$, ensuring seam factors are \underline{\smash{independent}} (see \Figref{fig:main_fig}).

\section{VAE + Diagonal Posteriors $\Rightarrow$ $\sim$C1-C2}
\label{sec:cleaning_up_covariances}
\vspace{-4pt}

Having defined disentanglement and characterised it in terms of conditions on the generative function, we now consider why it arises in  Gaussian VAEs with diagonal posteriors.
Prior work attributes VAE disentanglement to diagonal posteriors via {approximate} relationships \citep[][Eq.~11]{rolinek2019variational, kumar2020implicit}. 
This is actually \textit{exact} from the Price/Bonnet Theorem  \citep[e.g.][]{opper2009variational}. The ELBO with Gaussian posteriors is maximal when covariances satisfy 
\vspace{-2pt}
\begin{align}
    \label{eq:real_relationship}
    \!\!\Sigma_x\inv 
        \ &=\  
        \mI - \tfrac{1}{\beta}\E_{q(z|x)}[\tL_z(x)] 
        \nonumber\\  
        &\overset{*}{=}\ 
        \mI + \tfrac{1}{\beta\sigma^2}\E_{q(z|x)}[\mTm{\mJ_z}
            - (x \cl- d(z))^\top\tH_z]          
        \ ,
\end{align}

\vspace{-9pt}
where
 $\tL_{z}(x) \eq \nabla_z^2\log p_\theta(x|z)$ is the log likelihood Hessian;  
 $\mJ_{z} \cl\doteq \deriv{x}{z}$ and $\tH_z\cl\doteq \deriv{^2x}{z^2}$ are the Jacobian and Hessian of the decoder (all evaluated at $z\clin\gZ$); and (*) assumes a Gaussian VAE. 
It can be seen that \Eqref{eq:real_relationship} generalises the linear result ($\mM\inv\!$, \Eqref{eq:ppca-posterior}) and 
relates $\sigma^2 \cl\doteq \Var[x|z]$ and $\Sigma_x \cl\doteq \Var[z|x]$, showing that (un)certainty in $\rx$ and $\rz$ go hand in hand.
 
Notably, \Eqref{eq:real_relationship} implies that diagonal posteriors encourage decoder derivatives to diagonalise (in aggregate/expectation), and we know specific diagonal derivatives give disentanglement (\thmref{thm:non_linear}). To formalise the link, we consider
\vspace{-2pt}
\begin{restatable}{property}{assnoncancellation}
    \label{prp:non-cancellation}
    For fixed $x$ and $z$, the matrices $\mJ_z^\top\mJ_z$ and $(x-d(z))^\top\tH_z$ (as in \Eqref{eq:real_relationship})
    are \underline{each} diagonal.
\end{restatable}


\vspace{-8pt}
\begin{restatable}[\textbf{VAE Disentanglement}]{lemma}{lemdiagterms}
    \label{lem:diag-terms}
    %
    If a trained Gaussian VAE with diagonal posteriors induces \textbf{P}\ref{prp:non-cancellation} for $z$ in a neighbourhood of $e(x)$, then it induces \textbf{C}1 on that neighbourhood and \textbf{C}2 to a first order approximation.
\end{restatable}

\vspace{-18pt}
\proof{See \appref{app:proof_decoder_constraints}.
    (C1 follows from SVD of $\mJ_z$; C2 follows since directions $r(z) \eq x-d(z)\clin\gX$ of the directed Hessian are, to a first order approximation, tangent to $\gM_d$.)
}

Thus, \textbf{\Eqref{eq:real_relationship} suggests a mechanism for VAE disentanglement}:
diagonal posteriors encourage local termwise diagonality of the decoder derivatives on posterior-supported regions; this yields \textbf{C1} exactly and \textbf{C2} to first order, hence biases the push-forward density toward disentanglement. 

At the same time, qualifications in the chain linking \Eqref{eq:real_relationship} to pointwise disentanglement conditions C1-C2 (such as ``in aggregate/expectation'', restriction to posterior support, ``first approximation'') \textbf{may justify why VAE disentanglement is inconsistent in practice} \citep{locatello2019challenging}.

    %


It can be shown that $\beta$ implicitly controls the likelihood variance (see \appref{app:beta}), in particular \emph{higher $\beta$ widens posteriors}. Meanwhile \Eqref{eq:real_relationship} shows disentanglement conditions C1-C2 are encouraged over posteriors. 
Thus, \textbf{\Eqref{eq:real_relationship} suggests why increasing $\beta$ enhances disentanglement} \citep{betavae, burgess2018understanding}: it broadens where disentanglement is encouraged and increases posterior overlap where simultaneous constraints apply (see Fig.~\ref{fig:taming}).

Lastly, we note that \thmref{thm:non_linear} suggests that where disentanglement is observed, constraints \textbf{C1-C2} should, to some extent, hold. Empirical evidence of this is reviewed in \Secref{sec:empirical}.
%

\section{Identifiability}
\label{sec:identifiability}
\vspace{-4pt}

We now investigate whether a model capable of fitting data generated under the model class, will learn the \textit{true} generative factors, up to a particular symmetry say, or if it might find a spurious factorisation.

\begin{restatable}[\textbf{LVAE Identifiability}]{theorem}{corlinear}
\label{cor:linear_ident}
Let data be generated under the linear Gaussian LVM (\Eqref{eq:ppca_assumps}) with ground-truth $g(z) \eq \mW z$, 
$\mW\!=\mU\mS\mV^\top\inRR{m}{d}$ of full column rank and \emph{distinct} singular values. 
Let an LVAE with \emph{diagonal} posteriors be trained on $n$ samples, and as $n\!\to\!\infty$ its learned parameters yield $p_\mu^{\text{(d)}} \equiv p_\mu^{\text{(g)}}$ on the mean manifold. 
Then the LVAE achieves disentanglement (D\ref{def:disentanglement}) and identifies ground-truth independent components on $\gM_{g}$ up to permutation and sign (\textbf{P\bm{\&}S}).
\end{restatable}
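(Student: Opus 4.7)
The plan is to reduce the claim to two steps already available in the paper: (i) the closed-form MLE analysis for linear VAEs recalled in \Secref{sec:ppca}-\Secref{sec:lvaes}, and (ii) the LVAE disentanglement argument given under ``LVAE disentanglement'' in \Secref{sec:disentanglement}. The only work left is to check that the population limit pins down the relevant SVD factors uniquely up to sign-permutation when the ground-truth singular values are distinct.

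First I would identify the population-level quantities. Under the data-generating model $x=\mW z + \epsilon$ with $z\!\sim\!\gN(0,\mI)$, $\epsilon\!\sim\!\gN(0,\sigma^2\mI)$, the data covariance is $\mW\mW^\top + \sigma^2\mI = \mU_{\mathsmaller\mW}(\mS_{\mathsmaller\mW}^2+\sigma^2\mI)\mU_{\mathsmaller\mW}^\top + \sigma^2 \mI_{\perp}$, where the orthogonal complement contributes $\sigma^2$. As $n\to\infty$ the empirical $\tfrac1n\mX\mX^\top$ converges to this population covariance, so in \Eqref{eq:ppca-posterior} we have $\mU_{\!\mathsmaller\mX}=\mU_{\mathsmaller\mW}$ and $\mathbf{\Lambda}_{\!\mathsmaller\mX}=\mS_{\mathsmaller\mW}^2+\sigma^2\mI$, each unique up to sign because the $d$ ground-truth singular values are distinct and strictly larger than the noise floor (full column rank of $\mW$).

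Next I would invoke the diagonal-posterior argument already made around \Eqref{eq:breaking_symmetry_linear}. The MLE decoder is $\mD_*=\mU_{\!\mathsmaller\mX}(\mathbf{\Lambda}_{\!\mathsmaller\mX}-\sigma^2\mI)^{1/2}\mR$ for some orthogonal $\mR$, and optimality of the Gaussian posterior forces $\Sigma_*=\sigma^2\mR^\top\mathbf{\Lambda}_{\!\mathsmaller\mX}^{-1}\mR$. Since $\mathbf{\Lambda}_{\!\mathsmaller\mX}^{-1}$ has \emph{distinct} diagonal entries under our assumption, the requirement that $\Sigma_*$ be diagonal forces $\mR$ to be a signed permutation $\mP$: indeed, for any diagonal $\mathbf{\Lambda}$ with distinct entries, $\mR^\top\mathbf{\Lambda}\mR$ is diagonal iff $\mR$ permutes and signs the eigenvectors. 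Consequently $\mD_*=\mU_{\mathsmaller\mW}\mS_{\mathsmaller\mW}\mP$, matching the ground-truth loading matrix up to permutation and sign of its columns.

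Finally, having pinned down $\mD_*$ up to \textbf{P\&S}, I would apply the ``LVAE disentanglement'' discussion in \Secref{sec:disentanglement}: the left singular vectors $\vu^i$ of $\mD_*$ are the columns of $\mU_{\mathsmaller\mW}$, and the push-forward manifold density factorises as $p_\mu=\prod_i \gN(\vu^{i\top}\!\mu;0,s^{i2})$ with each $\vu^i$ the image of a distinct latent axis. Because $\mD_*$ realises exactly the population covariance $\mW\mW^\top$, the manifold densities $p_\mu^{(d)}$ and $p_\mu^{(g)}$ coincide and their principal (and therefore independent-component) axes on $\gM_g$ are precisely $\{\vu^i\}$. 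This gives both disentanglement in the sense of D\ref{def:disentanglement} and identification of the ground-truth independent components up to \textbf{P\&S}. The only delicate step is the rigidity argument in the second paragraph: if singular values coincided, an additional orthogonal rotation within the degenerate eigenspace would leave $\Sigma_*$ diagonal and destroy identifiability, which is exactly why the hypothesis of distinct singular values is indispensable.
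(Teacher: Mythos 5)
Your proposal is correct, but it reaches the conclusion by a more explicit, linear-algebra-specific route than the paper. You exploit the closed-form PPCA solution: compute the population covariance, read off $\mU_{\!\mathsmaller\mX}=\mU_{\mathsmaller\mW}$ and $\mathbf{\Lambda}_{\!\mathsmaller\mX}=\mS_{\mathsmaller\mW}^2+\sigma^2\mI$, and then use the rigidity identity $\Sigma_{*}=\sigma^2\mR^\top\mathbf{\Lambda}_{\!\mathsmaller\mX}\inv\mR$ from \Eqref{eq:breaking_symmetry_linear} together with distinctness of the eigenvalues to force $\mR$ to be a signed permutation, giving an explicit formula $\mD_*=\mU_{\mathsmaller\mW}\mS_{\mathsmaller\mW}\mP$ for the learned decoder. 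The paper's proof never computes $\mD_*$: it instead shows that the ground-truth density and the model density each factorise into independent 1-D Gaussians along their respective left-singular directions (the model side via P\ref{prp:non-cancellation} holding trivially for a linear decoder, hence \textbf{C1} by \Cref{lem:diag-terms} and disentanglement by \Cref{thm:non_linear}), and then concludes from uniqueness of the Gaussian product decomposition under distinct variances. The two arguments rest on the same underlying mechanism --- \Eqref{eq:breaking_symmetry_linear} is exactly the linear instance of \Eqref{eq:real_relationship} that drives \Cref{lem:diag-terms} --- but your version buys a concrete expression for the learned parameters, while the paper's version is deliberately structured to mirror the non-linear identifiability proof (\Cref{cor:nonlinear-ident}) and so generalises. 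One small imprecision: $\mD_*=\mU_{\mathsmaller\mW}\mS_{\mathsmaller\mW}\mP$ matches $\mU_{\mathsmaller\mW}\mS_{\mathsmaller\mW}$ up to P\&S, not the loading matrix $\mW=\mU_{\mathsmaller\mW}\mS_{\mathsmaller\mW}\mV_{\mathsmaller\mW}^\top$ itself; as \Cref{rmk:v_immaterial} notes, $\mV_{\mathsmaller\mW}$ is unrecoverable but also irrelevant to the independent components on $\gM_g$, so this does not affect the claim you actually need. Your closing observation about degenerate singular values correctly pinpoints why distinctness is indispensable.
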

\vspace{-14pt}
\begin{proof}
    See \appref{app:proof_identifiability_LVAE} (Follows SVD uniqueness).
\end{proof}
\vspace{-2pt}
Thus, if an LVAE fits the data, latent axes identify the ground truth independent factors.
\vspace{4pt}

\begin{remark}[\textit{$\mV$ immaterial}]\label{rmk:v_immaterial}
    Ground-truth right singular vectors $\mV$ (of $\mW$) are not recoverable from $p(x)$ under the PPCA/LVAE model; this \emph{is not} a lack of identification. With a standard Gaussian prior, any orthonormal change of basis of $z$ preserves independence and leaves $p(x)$ unchanged.
    The only data-relevant object is $\mU\mS$;
    the arbitrary basis ($\mV$) in which $\mW$ was written
    has no bearing on $p(x)$.\footnote{This is analogous to the difference between a linear transformation and its matrix representation, requiring a specific basis.}
\end{remark}


To generalise the linear result, we first isolate the part that depends only on the manifold density itself: if $p_\mu$ admits a seam factorisation in local orthonormal seam coordinates, then those seams are \textit{intrinsic} to $p_\mu$ and unique (up to P\&S).
We then show that any matching pushforward satisfying C1–C2 necessarily recovers those seams. (All proofs in Appendix~\ref{app:intrinsic-uniqueness})

\begin{restatable}[\textbf{Seams are Intrinsic}]{lemma}{lemintrinsicseams}\label{thm:intrinsic-uniqueness}
    Let $\gM\subseteq\gX$ carry a manifold density $p_\mu$. 
    Assume that on a regular set $\gM_{\mathrm{reg}}$ there exist 
    local coordinates $u=(u_1,\ldots,u_d)$, whose coordinate lines are 1-D seams and directions form an orthonormal basis of $T_x\gM$ at each $x\in\gM_{\mathrm{reg}}$, together with 1-D densities $\{f_i\}$ such that
    \begin{equation}\label{eq:intrinsic_factorisation}
        p_\mu(x) \;=\; \prod_{i=1}^d f_i\!\big(u_i(x)\big), 
        \qquad x\in\gM_{\mathrm{reg}}.
    \end{equation}
    Let $\tH_x \cl\doteq \nabla^2_{\gM}\log p_\mu(x)$ denote the intrinsic Hessian on the manifold, and assume its eigenvalues are pairwise distinct \textit{a.e.}.
    %
    Then for each $x\clin\gM_{\mathrm{reg}}$, the $d$ seam directions 
    are determined intrinsically by $p_\mu$, as eigenvectors of $\tH_x$, unique up to P\&S.
\end{restatable}

\begin{restatable}[\textbf{A matching pushforward finds seams}]{lemma}{lemdecoderfindsseams}\label{cor:intrinsic-uniqueness}
    Under assumptions of \Cref{thm:intrinsic-uniqueness}, let $d:\gZ\!\to\!\gX$ be \emph{c.i.d.a.e.} with factorised prior $p(z)=\prod_i p_i(z_i)$ and push‑forward density $p_\mu^{(d)}\equiv p_\mu$ \emph{matching} on $\gM_d\doteq\{d(z)\} = \gM$. 
    If $d$ satisfies \textbf{C1–C2} \textit{a.e.}, then for any $z$ and $x=d(z)$:
    %
    \begin{itemize}[leftmargin=0.5cm,itemsep=-2pt,topsep=-2pt]
        \item left singular vectors $\mU_z$ of $\mJ_z$ coincide with the intrinsic seam directions in \Cref{thm:intrinsic-uniqueness} (up to P\&S);
        \vspace{-2pt}
        \item the images under $d$ of singular-vector paths (D\ref{def:sv-path}) are exactly the intrinsic seams through $x$;
        \vspace{-2pt}
        \item along the $i$‑th intrinsic seam, the factor $f_i$ is the 1‑D push‑forward of $p_i(z_i)$ (as in \Cref{thm:seam-factorisation-general}).
    \end{itemize}
\end{restatable}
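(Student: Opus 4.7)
The plan is to chain together three earlier results: the disentanglement implication (\thmref{thm:non_linear}), the explicit seam factorisation (\lemref{thm:seam-factorisation-general}), and the intrinsic uniqueness of seams (\lemref{thm:intrinsic-uniqueness}). The combination will force the decoder's SVD-derived objects to realise the unique intrinsic factorisation of $p_\mu$, which is exactly what the three bullets demand.

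First, I would apply \thmref{thm:non_linear} to $d$: since it is \emph{c.i.d.a.e.}, the prior factorises, and \textbf{C1}--\textbf{C2} hold a.e., its push-forward density $p_\mu^{(d)}$ is disentangled in the sense of D\ref{def:disentanglement}. Under \textbf{C1}, the right singular vectors of $\mJ_z$ are standard basis vectors, so axis-aligned latent lines through $z$ coincide with the singular-vector paths $\gV^i_z$; their images on $\gM_d$ are therefore the seams $\gM^i_{d,z}$, tangent to the left singular vectors $\vu^i(z)$ at $x=d(z)$. By \lemref{thm:seam-factorisation-general}, this yields the explicit factorisation
\begin{equation*}
    p_\mu^{(d)}\bigl(d(z)\bigr) \;=\; \prod_{i=1}^d \tfrac{p_i(z_i)}{s^i(z)}\,,
\end{equation*}
where each factor is the 1-D push-forward of $p_i(z_i)$ along the $i$-th decoder seam.

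Second, because $p_\mu^{(d)}\equiv p_\mu$ on $\gM_d=\gM$, the previous step hands $p_\mu$ precisely the seam factorisation required by the hypothesis of \lemref{thm:intrinsic-uniqueness}: scalar coordinates $u_i^{(d)}(x)$ along decoder seams each vary only along a 1-D curve through $x$, and the product decomposition holds. Invoking \lemref{thm:intrinsic-uniqueness}, the $d$ seam directions at every regular $x=d(z)$ must be eigenvectors of $\tH_x = \nabla_x^2 \log p_\mu(x)$, unique up to permutation and sign (\textbf{P\&S}). Since the decoder seams have tangent directions $\vu^i(z)$ (the columns of $\mU_z$), this gives the first bullet. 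The second bullet is immediate from \lemref{lem:seams}: $d$ maps $\gV^i_z$ to $\gM^i_{d,z}$ by construction, and the latter have just been identified with the intrinsic seams. The third bullet falls out of comparing the two seam factorisations of $p_\mu$: up to the same P\&S permutation aligning seam directions, the intrinsic factor $f_i$ must equal the decoder factor $p_{\pi(i)}(z_{\pi(i)})/s^{\pi(i)}(z)$ produced by \lemref{thm:seam-factorisation-general}.

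The main obstacle is keeping the P\&S bookkeeping consistent across the three objects being matched: the intrinsic seam directions (indexed by Hessian eigenvalues), the decoder singular vectors (indexed by the strict ordering $s^1(z)>\cdots>s^d(z)$ from D\ref{def:regular-set}), and the prior coordinates (indexed by $z_i$). The distinct-eigenvalues assumption of \lemref{thm:intrinsic-uniqueness} pins down a well-defined permutation on each connected component of the regular set, and the orderings above make both the decoder's and the intrinsic seam labellings canonical up to sign, so a single permutation $\pi$ suffices globally on each component. A subsidiary point is arguing that, once the seam directions are fixed, the associated 1-D marginals are themselves uniquely determined by $p_\mu$ (so that the equality of factors in the third bullet actually forces $f_i$ to be the push-forward of $p_{\pi(i)}$); this follows from uniqueness of marginals in a product decomposition with fixed coordinate functions.
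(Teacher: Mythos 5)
Your proposal is correct and follows essentially the same route as the paper: both arguments reduce to the fact that the decoder's seam directions and the intrinsic seam directions are each eigenvectors of the shared on-manifold Hessian $\tH_x$, which under the simple-spectrum assumption are unique up to permutation and sign, with the second and third bullets then following from \Cref{lem:seams} and \Cref{thm:seam-factorisation-general}. The only packaging difference is that the paper re-derives the eigendecomposition of $\tH_x$ explicitly from the decoder's factorisation, whereas you obtain it by applying \Cref{thm:intrinsic-uniqueness} to the decoder's own factorisation of $p_\mu$ — a legitimate shortcut (and your remark on pinning down the $f_i$ as normalised densities is a point the paper glosses over).
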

\vspace{-6pt}

\begin{restatable}[\textbf{Gaussian VAE Identifiability}]{theorem}{cornonlinear}
\label{cor:nonlinear-ident}
Let data be generated by \emph{c.i.d.a.e.}\ $g:\gZ\!\to\!\gX$  with standard Gaussian prior.
Let a Gaussian VAE with \emph{diagonal} posteriors learn a decoder $d\cl:\gZ\!\to\!\gX$. Suppose both $g$ and $d$ satisfy \textbf{C1–C2} 
and 
manifold densities match: 
$p_\mu^{(d)}\!\!\equiv\! p_\mu^{(g)}$ on $\gM\eq\{g(z)\}\eq\{d(z)\}$. If the eigenvalues of the tangent Hessian (see proof) are pairwise distinct \textit{a.e.}, then $d$ identifies ground‑truth independent components on $\gM_g$, up to P\&S.
\end{restatable}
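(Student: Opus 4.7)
The plan is to reduce identifiability to the intrinsic seam structure on $\gM$ and then invoke \Cref{thm:intrinsic-uniqueness} and \Cref{cor:intrinsic-uniqueness}. The first step is to apply \Cref{thm:non_linear} to the ground truth $g$: since $g$ is c.i.d.a.e.\ with factorised prior and satisfies \textbf{C1--C2}, $p_\mu^{(g)}$ is disentangled, so \Cref{thm:seam-factorisation-general} gives $p_\mu^{(g)}(g(z))=\prod_i f_i^{(g)}(u_i^{(g)}(x))$ with each $f_i^{(g)}$ the 1-D push-forward of $p_i(z_i)$ along the $i$-th singular-vector path through $z$. This is precisely the factorisation required by the hypothesis of \Cref{thm:intrinsic-uniqueness}, and under the matching assumption $p_\mu^{(d)}\equiv p_\mu^{(g)}$ on $\gM$ the same factorisation is available for $p_\mu^{(d)}$.

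With the tangent-Hessian distinct-eigenvalue assumption in hand, \Cref{thm:intrinsic-uniqueness} then pins down the seam directions at every regular $x\in\gM$ as eigenvectors of $\tH_x\doteq\nabla_x^2\log p_\mu(x)$, determining them uniquely up to permutation and sign. I would next apply \Cref{cor:intrinsic-uniqueness} to the learned decoder $d$: it is c.i.d.a.e., satisfies \textbf{C1--C2}, and its push-forward matches $p_\mu$, so the left singular vectors $\mU_z$ of $\mJ_z^{(d)}$ coincide (up to P\&S) with the intrinsic Hessian eigenbasis, the images of its s.v.\ paths are the intrinsic seams, and its 1-D seam factors are push-forwards of the $p_i(z_i)$. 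Applying the same corollary to $g$ yields the analogous statement for $\mU_z^{(g)}$. Because the prior is shared and both $g$ and $d$ realise the same intrinsic seams with the same 1-D marginals on each seam, the independent components on $\gM_g$ recovered by $d$ coincide with those of $g$ up to a fixed permutation and sign, giving the claimed identifiability.

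The main obstacle is bookkeeping between latent-side and ambient-side objects rather than any new estimate: \textbf{C1--C2} constrain the decoder's Jacobian $\mJ_z$ in its own latent parameterisation, whereas \Cref{thm:intrinsic-uniqueness} speaks of the ambient log-density Hessian $\tH_x$ on $\gM$. One has to argue carefully, and only once, that under \textbf{C1--C2} the $\mU$-basis of \Cref{thm:seam-factorisation-general} and the Hessian eigenbasis of \Cref{thm:intrinsic-uniqueness} are the same orthonormal frame on the common regular set; distinctness of both the singular values of $\mJ_z$ (on $\gZ_{\mathrm{reg}}$) and of the tangent-Hessian eigenvalues (on $\gM_{\mathrm{reg}}$) is what reduces the residual symmetry to exactly permutation and sign.
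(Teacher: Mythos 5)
Your proposal is correct and follows essentially the same route as the paper: establish the seam factorisation of the common $p_\mu$ (via \Cref{thm:non_linear}/\Cref{thm:seam-factorisation-general}), use \Cref{thm:intrinsic-uniqueness} with the simple-spectrum assumption on the tangent Hessian to fix the seam directions intrinsically up to P\&S, and then apply \Cref{cor:intrinsic-uniqueness} to both $g$ and $d$ to conclude their left singular frames, seams, and 1-D seam factors coincide. The paper's version is only marginally more explicit in that it separately matches $\mU$ and then $\mS$ (by equating on-manifold scores and integrating along seams), but this is the same argument you sketch.
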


\vspace{-8pt}

Thus, if a Gaussian VAE with diagonal covariances fits the push-forward of a Gaussian distribution under the conditions of \Cref{thm:non_linear}, then it identifies and disentangles the ground truth generative factors (up to permutation/sign).

\vspace{4pt}
\begin{remark}
    P\&S symmetry is optimal since seams follow singular vectors $\vu_i$ that have no inherent order/orientation. 
\end{remark}
\vspace{-2pt}

We can also consider fitting a Gaussian VAE to data sampled from the push-forward of other priors.

\vspace{-4pt}
\begin{corollary}[\textbf{BSS}]
    In \Cref{cor:nonlinear-ident}, if priors $p^{(g)}(z)$ and $p^{(d)}(z)$ factorise
    and $p^{(g)}_\mu\cl\equiv p^{(d)}_\mu$ with C1-C2 holding \emph{a.e.}, then the seam decomposition $p_\mu$ on $\gM$ is unique, up to P\&S, 
    and $g$ and $p^{(g)}(z)$ are recoverable up to an axis‑aligned diffeomorphism $\phi\doteq g\inv \!\circ d:\gZ\to\gZ$.
\end{corollary}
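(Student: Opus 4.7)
The plan is to treat the corollary as a small extension of the preceding identifiability and intrinsicness lemmas: first pin down the seam decomposition of $p_\mu$ on $\gM$ using \Cref{thm:intrinsic-uniqueness}, and then show that $g$ and $d$ must agree up to an axis-aligned reparameterisation of $\gZ$ using \Cref{cor:intrinsic-uniqueness}. The first claim is immediate: since the on-manifold Hessian $\tH_x\cl\doteq\nabla_x^2\log p_\mu(x)$ has pairwise distinct eigenvalues a.e., \Cref{thm:intrinsic-uniqueness} pins down the $d$ seam directions at each $x\clin\gM_{\mathrm{reg}}$ as the eigenvectors of $\tH_x$, uniquely up to permutation and sign, and purely as an intrinsic object of $p_\mu$.

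For the second claim I would apply \Cref{cor:intrinsic-uniqueness} to $g$ and to $d$ separately, using $\gM_g\eq\gM_d\eq\gM$, the factorised priors, $p^{(g)}_\mu\cl\equiv p^{(d)}_\mu$, and \textbf{C1--C2}. Each application yields that the left singular vectors of the respective Jacobian coincide (up to P\&S) with the intrinsic seam directions, that the images of latent axis-aligned s.v.\ paths are exactly these seams, and that the $i$-th seam factor is the 1-D push-forward of the $i$-th prior marginal. Injectivity of $g$ and $d$ then makes $\phi\doteq g\inv\!\circ d:\gZ\to\gZ$ a well-defined diffeomorphism on each connected component of the regular set. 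To show $\phi$ is axis-aligned, I fix $z_0$, set $x_0\eq d(z_0)\eq g(\phi(z_0))$, and consider the curve $t\cl\mapsto z_0+t\ve_i$: by \textbf{C1} for $d$, its $d$-image is the $i$-th intrinsic seam at $x_0$; and by \Cref{cor:intrinsic-uniqueness} for $g$, that same seam equals the $g$-image of the axis curve $\phi(z_0)+t'\ve_{\sigma(i)}$ for some permutation $\sigma$ and sign. Hence $\phi$ sends the $i$-th axis to the $\sigma(i)$-th axis with a 1-D reparameterisation, so globally $\phi$ decomposes coordinate-wise as $\phi(z)_{\sigma(i)}\eq\pm\phi_i(z_i)$ for scalar diffeomorphisms $\phi_i$. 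Applying change-of-variables to the factorised priors then yields $p^{(d)}_i(z_i)\eq p^{(g)}_{\sigma(i)}(\phi_i(z_i))\,|\phi_i'(z_i)|$, so $g$ and $p^{(g)}$ are recovered up to exactly this axis-aligned $\phi$.

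The main obstacle will be promoting the pointwise seam/axis correspondence---which a priori carries a point-dependent permutation and sign---into a single global axis-aligned decomposition of $\phi$. This requires the continuity of the SVD and the strict ordering of singular values on connected components of the regular set (\defref{def:regular-set}) to prevent the permutation from jumping along paths; points where $\mJ$ degenerates or singular values coincide are excluded by the a.e.\ hypothesis and do not affect the statement. Beyond this topological argument, the proof is essentially bookkeeping with the change-of-variables formula and the two prior factorisations.
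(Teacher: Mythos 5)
Your proposal is correct and follows essentially the same route as the paper, which simply cites the proof of \Cref{cor:nonlinear-ident} (intrinsic uniqueness of seams via \Cref{thm:intrinsic-uniqueness} plus density matching) and notes that the diffeomorphism $\phi\doteq g\inv\!\circ d$ exists by $\gM_g\eq\gM_d$ and injectivity. You have merely expanded the paper's one-line argument, and your added care about globalising the pointwise permutation/sign along connected components of $\gZ_{\mathrm{reg}}$ and the change-of-variables recovery of $p^{(g)}$ fills in details the paper leaves implicit.
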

\vspace{-16pt}
\begin{proof}
    Immediate from first part of \Cref{cor:nonlinear-ident} proof, which does not depend on the form of $p(z)$. The diffeomorphism follows since $\gM_g\eq\gM_d$ and by injectivity of $d,\,g$.
\end{proof}
\vspace{-9pt}

\vspace{4pt}
\begin{remark}[\textit{ICA \emph{un}identifiability}]
    Classical non-linear ICA aims to identify ground truth factors of data generated by an LVM (\Eqref{eq:lvm}), 
    which is impossible in general for Gaussian $p(z)$, without extra side information, e.g.\ auxiliary 
    variables \citep{khemakhem2020variational, locatello2019challenging}.
    Our results show that identifiability of independent components of $p(x)$ by any pushforward model (i.e.\ disentanglement) requires (a) that the manifold density $p_\mu$ factorises over the manifold (intrinsic seams); and (b) the pushforward function satisfies C1–C2.\footnote{
        The ICA unidentifiability proofs we are aware of rely on arbitrary rotation of the Gaussian prior (\textit{cf} $\mR$ in \Eqref{eq:ppca-posterior}), which C1 prohibits, as readily seen in the linear case (\Figref{fig:linear_results}). (See also \Cref{rmk:v_immaterial}.)} 
    Even with a rotationally \textit{asymmetric} factorised $p(x)$, C1–C2 remain necessary for each latent co-ordinate to govern an independent factor of the pushforward density. Modelling  posteriors and constraining them to be diagonal indirectly (approximately) imposes C1-C2, thus a VAE (approximately) identifies independent components without additional side information.
\end{remark}

In summary, we have defined disentanglement, shown that it holds if and only if C1-C2 hold, and that seam factors are unique and therefore identifiable, up to natural symmetry.

\begin{figure}[!b]
    \centering
    \captionsetup[subfigure]{labelformat=empty,skip=0.05\baselineskip}
    \subfloat{  
    \scalebox{0.7}{ 
        \begin{tikzpicture}
    \begin{axis}[
        width=0.59\textwidth,
        height=5.1cm,
        ybar,
        bar width=15pt,
        ylabel={\footnotesize Distance to Orthogonality},
        symbolic x coords={dSprites, syn-lin, syn-nonlin, MNIST, fMNIST},
        xtick=data,
        xtick style={draw=none},
        xticklabel style={font=\footnotesize},
        ymin=-0.01,
        ymax=2.1,
        legend style={at={(0.4,0.95)}, anchor=north, legend columns=1, font=\footnotesize},
    ]
    \addplot coordinates {(dSprites, 0.76) (syn-lin, 0.00) (syn-nonlin, 0.18) (MNIST, 1.59) (fMNIST, 1.36)};
    \addplot coordinates {(dSprites, 1.73) (syn-lin, 0.34) (syn-nonlin, 0.55) (MNIST, 1.93) (fMNIST, 2.02)};
    \legend{Diag $\Sigma_x$, Full $\Sigma_x$}
    \end{axis}
\end{tikzpicture} } 
    }%
    \hfill
    \caption{
        \textbf{
            Empirical support for P\ref{prp:non-cancellation}}:
        VAEs with diagonal $\Sigma_x$ show higher Jacobian column-orthogonality \citep{rolinek2019variational}.
    }
    \label{fig:p1_support}
\end{figure}

\section{Empirical Support}
\vspace{-4pt}
\label{sec:empirical}

\begin{figure*}[t!]
    \centering
    \begin{minipage}{0.8\textwidth}
        \captionsetup[subfigure]{labelformat=empty,skip=0.05\baselineskip}
        \subfloat[Diagonal posteriors]{\includegraphics[width=0.48\linewidth]{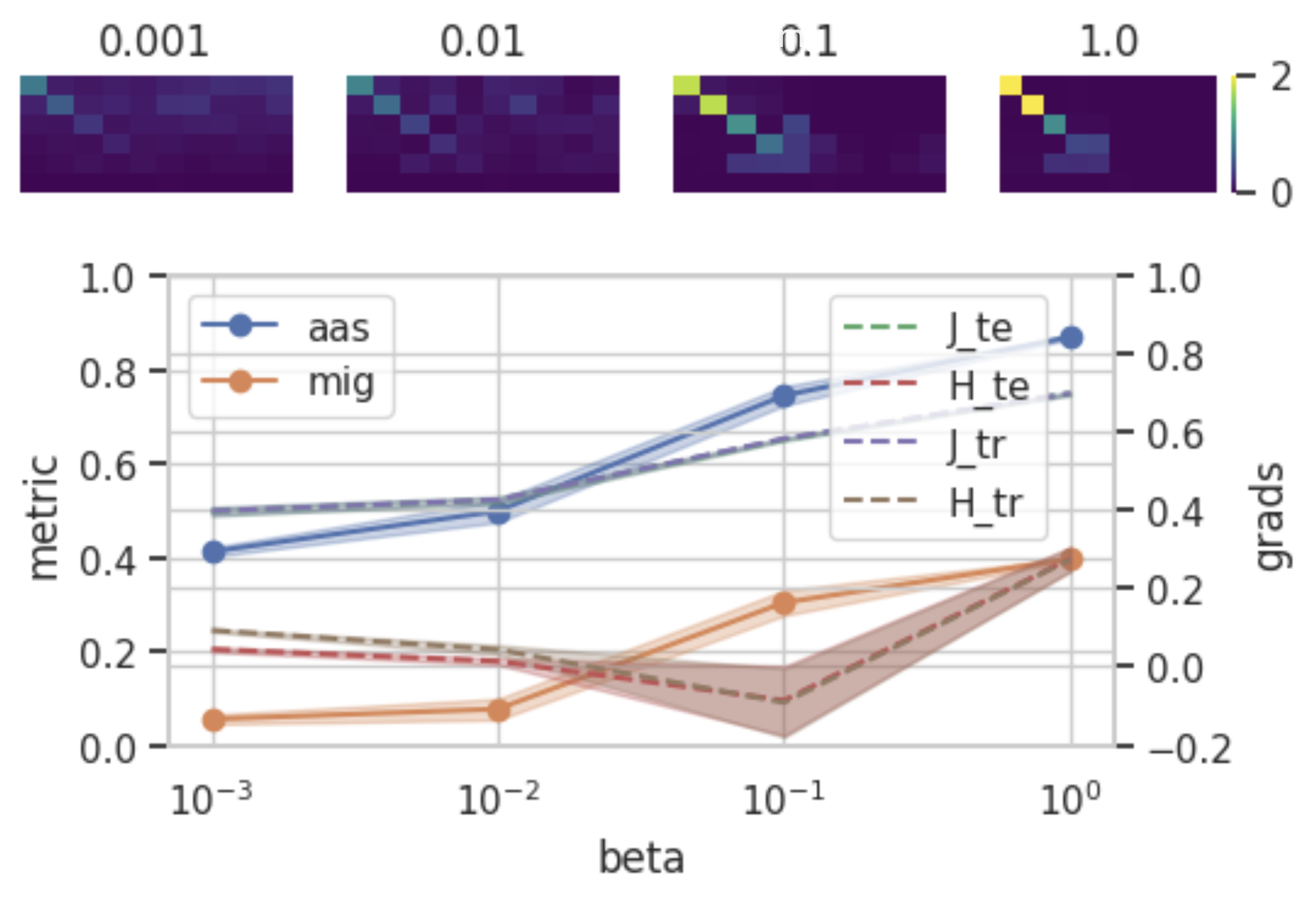}}%
        \hfill
        \subfloat[Full posteriors]{\includegraphics[width=0.48\linewidth]{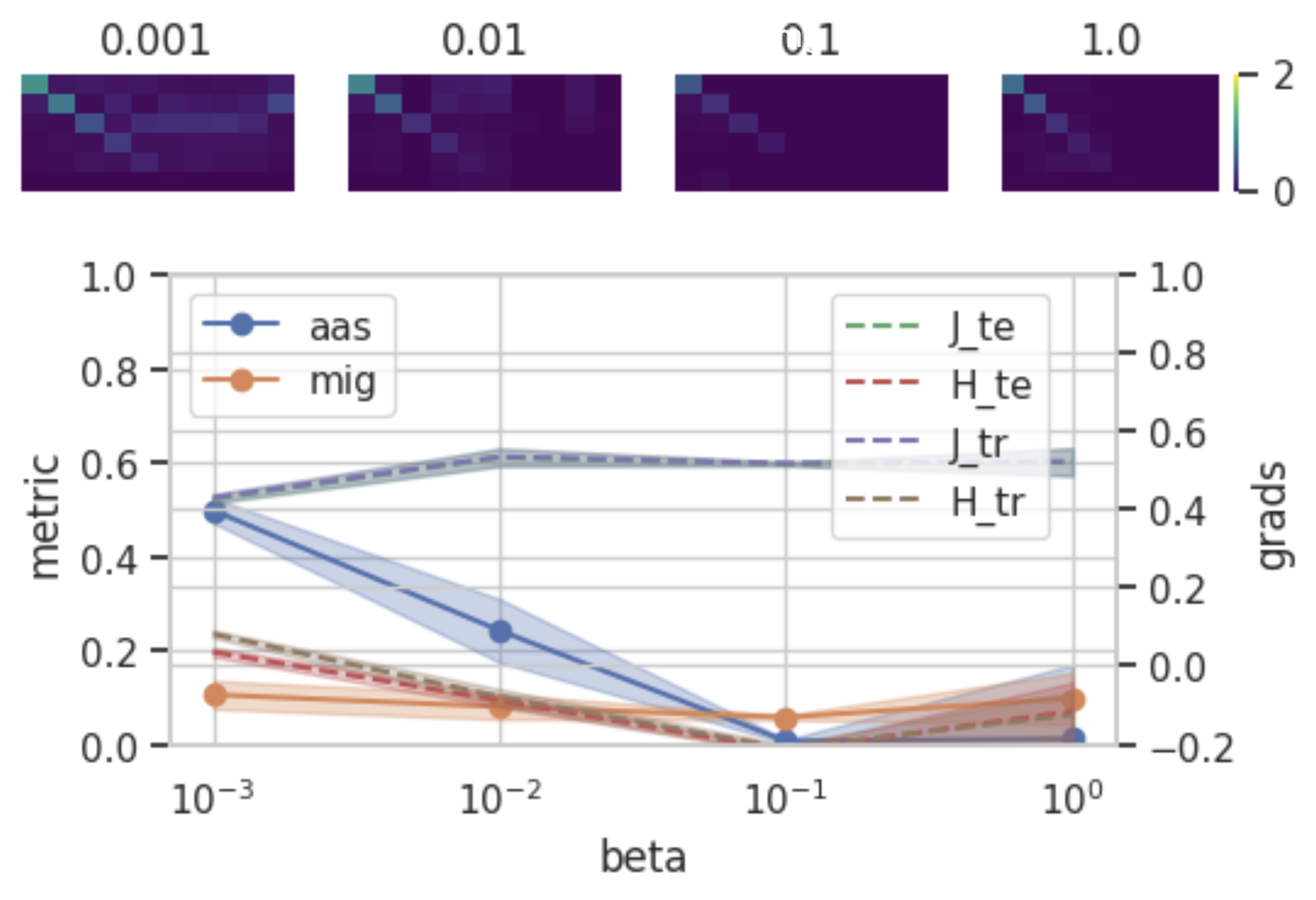}}%

    \end{minipage}
    \hfill
    \begin{minipage}{0.19\textwidth}
        \centering
        \captionsetup[subfigure]{labelformat=empty,skip=0.25\baselineskip}
        \subfloat[\scriptsize \centering \,Diag\newline $\beta\!: 1\quad$]{\includegraphics[width=0.33\linewidth]{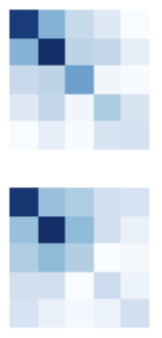}}%
        \subfloat[\scriptsize \centering \,Diag  \newline $1e\sminus3$]{\includegraphics[width=0.33\linewidth]{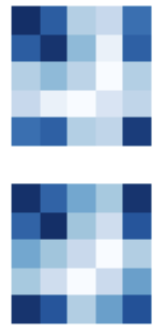}}%
        \subfloat[\scriptsize \centering \,Full\newline \!$1$]{\includegraphics[width=0.33\linewidth]{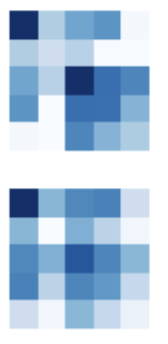}}%
        \vspace{4pt}
        \subcaption{\footnotesize 
            \centering (\textit{t}) $\smaller \mJ^\top_z\mJ$;  \newline
            (\textit{b}) $\smaller  (x\sminus d(z))\tH_z$ 
            } 

    \end{minipage}
    \vspace{-6pt}
    \caption{
         \textbf{Diagonal vs Full Posteriors}: (\textbf{\textit{left}}) 
         (\textit{bottom}) 
            disentanglement metrics and estimated diagonality of \Eqref{eq:real_relationship} terms (see \appref{app:metrics}). With diagonal posteriors, disentanglement and diagonality are correlated (supporting P\ref{prp:non-cancellation}), relative to full posteriors; 
         (\textbf{\textit{top}}) heatmaps of mutual information between model latents and ground truth factors. 
         (\textbf{\textit{right}}) derivatives in \Eqref{eq:real_relationship}. terms are less diagonal for lower beta or full posteriors. 
         (All results averaged over multiple runs)
         }
    \label{fig:non_linear_results}
    \vspace{-12pt}
\end{figure*}

We include empirical results to illustrate disentanglement and support our claims.
From our analysis, we expect 
    (1) diagonalised Jacobian and Hessian terms to correlate with disentanglement generally (\thmref{thm:non_linear}); and
    (2) VAEs with diagonal posteriors to promote diagonalised derivatives, likely more so in the Jacobian term than the Hessian term (\Secref{sec:cleaning_up_covariances}).

Both (1) and (2) are clearly illustrated in the linear case where ground truth factors are  known analytically.
\Figref{fig:linear_results} shows results for diagonal and full covariance LVAEs learning  Gaussian parameters (see caption for details). 
    (\textit{left}) Both models learn optimal parameters as expected.
    (\textit{centre}) However, only diagonal covariances cause right singular vectors $\mV$ of $\mJ_z$ to converge to standard basis vectors, i.e.\ $\mV\cl\to\mI$, inducing \textbf{C1} (prediction 2). 
    (\textit{right}) Thus latent traversals map to independent components along left singular vectors $\vu^i$, yielding disentanglement (D\ref{def:disentanglement}) (prediction 1). This provides compelling evidence that diagonal covariances ``break the  rotational symmetry'' of a Gaussian prior. 
    
    Though simplified, the linear case is fundamental since, from our analysis, disentanglement under general pushforward functions follows a similar rationale. Interestingly, \Figref{fig:linear_results} shows that learning parameters of a disentangled model is notably slower (\textit{left}), due to the rate $\mV\cl\to\mI$ (\textit{centre}). A diagonal covariance model must find one of a finite subset of the infinite solutions set of a full covariance model.

Various studies show empirical support for (1) and (2). \citet{rolinek2019variational} show that columns of a decoder's Jacobian are more orthogonal (i.e.\ $\mV\cl\to\mI$, \textbf{C1}) in  VAEs with diagonal posteriors than those with full posteriors (\Figref{fig:p1_support}), and that diagonality correlates with disentanglement. Supporting (1), \citet{kumar2020implicit} show that \textit{directly} inducing column-orthogonality in the decoder Jacobian promotes disentanglement. 

Complementary evidence comes from GANs, where the same decoder-side picture appears without any posterior model. 
\citet{ramesh2018spectral} trace locally disentangled directions by following Jacobian singular vectors of a GAN generator. 
\citet{wei2021orthogonal} directly regularise orthogonality of Jacobian-induced output variations across latent co-ordinates, closely aligned with diagonalising $J_z^\top J_z$, hence with C1. 
\citet{peebles2020hessian} directly penalise mixed second derivatives of the generator, a Hessian-side regulariser closely related to the mechanism in \Eqref{eq:real_relationship}, hence to the emergence of C2. 
Together these results support the view that disentanglement is controlled by derivative structure of a general pushforward map, not specific to the VAE objective.


\textbf{dSprites}:
Further to prior evidence, we train diagonal and full posterior Gaussian VAEs ($d\eq10$) on the \textit{dSprites} dataset with 5 known generative factors (results averaged over 5 runs). How well each latent co-ordinate identifies a ground truth factor can be estimated explicitly from their \textit{mutual information}, and a function of mutual information is often used to evaluate a model's overall disentanglement, e.g.\ the \textit{mutual information gap} (\text{MIG}, \citet{tcvae}). 

\Figref{fig:non_linear_results} (\textit{main plots}) reports MIG, \textit{axis alignment score} (AAS) (a novel metric based on entropy of the mutual information \textit{distribution}) and derivative diagonality estimates plotted against $\beta$ (see \appref{app:metrics} for details).
For diagonal posterior VAEs (\textit{left}), disentanglement and diagonality both broadly increase with $\beta$. For full posteriors (\textit{right}) no clear trend is observed. The heatmaps (top, aligned with main plots by $\beta$) show mutual information between each latent co-ordinate and ground truth factor (ordered greedily to put highest scores along the diagonal). We see that for diagonal posterior VAEs, disentanglement increases with $\beta$ and that individual latent co-ordinates (horizontal) correlate with distinct ground truth factors (vertical), whereas that trend is not observed for full posteriors. 
For illustration, \Figref{fig:non_linear_results} (right) shows heatmaps of the $d\cl\times d$ derivative terms in \Eqref{eq:real_relationship} ($\mJ_z^\top\mJ_z$ term top, Hessian term bottom), each for 
    diagonal covariances, $\beta\eq1$ (\textit{l}); 
    diagonal covariances, $\beta\eq0.001$ (\textit{c}); and 
    full covariances, $\beta\eq1$ (\textit{r}) (each averaged over a batch).
    
We see that diagonalisation of the Jacobian term is more evident than that of the Hessian, consistent with our analysis of \Eqref{eq:real_relationship} vs disentanglement conditions C1-C2 (\Secref{sec:cleaning_up_covariances}).

\textbf{CelebA}:
We include comparable analysis on the CelebA dataset of natural face images in Appendix~\ref{app:celeba}.

\textbf{Varying $\bm\beta$}:
Our understanding of the interplay between $\beta$ and disentanglement is that 
    higher $\beta$ implies higher expected noise, ``blurring'' reconstructions, but enhancing disentanglement; 
    while lower $\beta$ (less assumed noise) tightens reconstructions but limits disentanglement to concentrated, potentially disconnected regions of $\gZ$ (\Secref{sec:cleaning_up_covariances}). 
This suggests that the heuristic of starting with high $\beta$ and reducing it during training may give both disentangled and higher quality samples.
We run experiments on the dSprites dataset and report results in \appref{app:profile_results} for constant $\beta$ baselines (1, 0,001), and exponentially reducing $\beta$ ($1\cl\to0.001$) over training. As predicted, the results show that annealing $\beta$ gives both sharp reconstructions and good disentanglement. We note the resemblance of this to ``de-noising'' in autoencoders and diffusion models, suggesting that dynamically varying $\beta$ is an interesting, potentially principled, direction for future research.

\section{Related Work}
\vspace{-6pt}


\citet{betavae} first showed that disentanglement in VAEs is enhanced by setting $\beta\cl>1$ in the ELBO (\Eqref{eq:elbo}). 
\citet{burgess2018understanding} conjectured that diagonal posterior covariances may cause disentanglement. 
\citet{rolinek2019variational} showed supporting empirical evidence (Fig.~\ref{fig:seams}) and derived an approximate relationship between diagonal posteriors and Jacobian orthogonality, conjectured to cause disentanglement. \citet{kumar2020implicit} generalised the argument, reaching an approximation to the identity in \Eqref{eq:real_relationship}. We make the link between posterior covariances and decoder derivatives precise  in \Eqref{eq:real_relationship} and, by giving disentanglement a formal definition, show how it follows from \Eqref{eq:real_relationship} via constraints C1-C2, resolving the conjecture.

\citet{lucas2019don, bao2020regularized} and \cite{koehler2021variational} study properties of linear VAEs. Notably \citet{lucas2019don} show the equivalence of $\beta$ and Var$[x|z]$ in Gaussian VAEs (which we generalise in \appref{app:beta}); and prove identifiability of LVAEs (which we generalise to the non-linear case). 
\citet{zietlow2021demystifying} show that disentanglement can be sensitive to perturbing the data. 
\citet{reizinger2022embrace} seek to relate the ELBO to \textit{independent mechanism analysis} \citep{gresele2021independent}, which encourages column-orthogonality in the mixing function of ICA.\footnote{
    We report discrepancies in \citet{reizinger2022embrace} in \appref{app:errors}.}
Disentanglement relates closely to (noisy) ICA \citep[\Secref{sec:bg}, e.g.][]{hyvarinen1998noisy}. ICA assumes the same generative LVM but does not model the posterior, which we show is critical to disentanglement in VAEs, or otherwise explicitly induce C1-C2, necessary for disentanglement, rather asymmetric priors or side information are typically required.

Our pushforward perspective unifies several GAN disentanglement results. 
    \citet{ramesh2018spectral} identify disentangled directions from singular vectors of the generator Jacobian, close to our seam/s.v.\ path view.
    \citet{wei2021orthogonal} encourage output variations induced by different latent co-ordinates to be orthogonal, close to our C1. 
    \citet{peebles2020hessian} suppress mixed second derivatives of the generator, related to the mechanism behind C2. 
    We show, for a general pushforward density, why such derivative-side regularisers are not merely heuristic, but target structure necessary and sufficient for disentanglement in smooth pushforward models.

\citet{chadebec2022geometric} and \citet{arvanitidis2018latent} consider paths in latent space defined by the inverse image of paths over the data manifold (our \textit{s.v.\ paths}). \citet{pan2023geometric} claim that the data manifold is identifiable from a geometric perspective assuming Jacobian-orthogonality, differing to our probabilistic factorisation view.
\citet{bhowalvariational} consider linear and non-linear components of the encoder/decoder, loosely resembling our Jacobian SVD view. However, dissecting a function into linear/non-linear components is not well defined, whereas the SVD is unique (up to permutation/sign). 
\citet{buchholz2022function, buchholz2025robustness} analyse identifiability of function classes, e.g.\ proving that \textit{conformal maps} are identifiable and \textit{orthogonal coordinate transformations} (satisfying C1) are not. Our theoretically derived disentanglement constraints C1-C2 define a function class corresponding to a density factorisation that is unique and hence identifiable, up to symmetry (\thmref{cor:nonlinear-ident}). \citet{brady2023provably} and \citet{lachapelle2023additive} analyze \textit{additive/compositional} decoders with block‑diagonal Hessians in pixel space, strictly stronger than our \thmref{thm:non_linear}, which allows, e.g., rotations/scale/colour changes that would not be block‑diagonal in pixel space.

\section{Conclusion}
\vspace{-4pt}

Unsupervised disentanglement of generative factors of the data is of fundamental interest in machine learning, whether for generating synthetic data, ``breaking down'' data into its fundamental constituents or identifying specific latent attributes of interest. Thus, having a rigorous definition of disentanglement and understanding how it arises \textit{for free} in VAEs may be useful across machine learning paradigms. 

We present a definition of disentanglement (D\ref{def:disentanglement}) grounded in statistical independence, formalising the notion that each latent variable changes while leaving the others the same. Disentanglement amounts to factorising the manifold density into independent components, each factor depending on a single latent co-ordinate. 
We show that $\beta$ of a $\beta$-VAE controls the likelihood variance, explaining the empirical observation that $\beta\cl>1$ promotes disentanglement while degrading generative quality (and $\beta\cl<1$ mitigates \textit{posterior collapse}).

Our main results (\Cref{def:disentanglement}, \Cref{thm:seam-factorisation-general}, \Cref{thm:non_linear}, \Cref{thm:intrinsic-uniqueness}, \Cref{cor:intrinsic-uniqueness}, \Cref{cor:nonlinear-ident}) are \textbf{not specific to VAEs}, but are general to smooth push-forwards of a factorised prior, as also in GANs and flows. Indeed, several prior works show that generative factors can be identified in the GAN latent space via generator derivatives \cite{ramesh2018spectral, wei2021orthogonal, peebles2020hessian}.
Linking these results to the VAE disentanglement literature, we show that under specific conditions on the decoder Jacobian (C1-C2), the push-forward density over the data manifold factorises as a product of independent 1-D pushforwards. Indeed, the factorisation $p(z)\eq\prod_ip(z_i)$ in latent space can be considered \textit{projected/lifted} onto the manifold, with independent densities over \textit{singular vector paths} in latent space pushed-forward to independent densities over manifold \textit{seams}. We show that the C1/C2 constraints needed for disentanglement are encouraged in aggregate / in expectation by a VAE with diagonal posteriors, justifying both why disentanglement arises and why it is unreliable \citep{locatello2019challenging}.
Furthermore, independent factors are \textit{identifiable}, which is  significant in light of the \textit{unidentifability} of ICA with Gaussian priors.

VAEs and their variants form part of many state-of-the-art modelling pipelines, e.g.\ latent diffusion \citep[e.g.][]{stablediff, pandeydiffusevae, yang2023disdiff, zhang2022unsupervised} and LLMs. Other recent works show that supervised learning \citep{dhuliawala2023variational} and self-supervised learning \citep{bizeul2024probabilistic} can be viewed as latent models trained under ELBO variants. In future work we will look to extend our results to these other learning paradigms.

Neural network models are often considered too complex to explain, which is concerning given their increasing adoption in society. An improved theoretical understanding seems essential to optimally and safely take full advantage of their progress, particularly in critical systems. We hope our work is a useful step in that direction, providing new insight into how a data density can decompose over independent generative factors.


\clearpage
\bibliography{example_paper}

@article{ppca,
  title={Probabilistic principal component analysis},
  author={Tipping, Michael E and Bishop, Christopher M},
  journal={Journal of the Royal Statistical Society Series B: Statistical Methodology},
  volume={61},
  number={3},
  pages={611--622},
  year={1999},
  publisher={Oxford University Press}
}

@inproceedings{bengio2013representation,
  title={Representation learning: A review and new perspectives},
  author={Bengio, Yoshua and Courville, Aaron and Vincent, Pascal},
  booktitle={IEEE Transactions on Pattern Analysis and Machine Intelligence},
  year={2013},
}

@inproceedings{kingma2013auto,
  title={Auto-encoding variational bayes},
  author={Kingma, Diederik P and Welling, Max},
  booktitle={ICLR},
  year={2014}
}

@inproceedings{rezende2014stochastic,
  title={Stochastic backpropagation and approximate inference in deep generative models},
  author={Rezende, Danilo Jimenez and Mohamed, Shakir and Wierstra, Daan},
  booktitle={ICML},
  year={2014},
}

@inproceedings{betavae,
  title={{$\beta$-VAE: Learning Basic Visual Concepts with a Constrained Variational Framework}},
  author={Higgins, Irina and Matthey, Loic and Pal, Arka and Burgess, Christopher and Glorot, Xavier and Botvinick, Matthew and Mohamed, Shakir and Lerchner, Alexander},
  booktitle={ICLR},
  year={2017}
}

@article{burgess2018understanding,
  title={Understanding disentangling in $\beta$-VAE},
  author={Burgess, Christopher P and Higgins, Irina and Pal, Arka and Matthey, Loic and Watters, Nick and Desjardins, Guillaume and Lerchner, Alexander},
  journal={arXiv preprint arXiv:1804.03599},
  year={2018}
}

@article{tcvae,
  title={Isolating sources of disentanglement in variational autoencoders},
  author={Chen, Ricky TQ and Li, Xuechen and Grosse, Roger B and Duvenaud, David K},
  journal={Advances in neural information processing systems},
  volume={31},
  year={2018}
}

@inproceedings{kim2018disentangling,
  title={Disentangling by factorising},
  author={Kim, Hyunjik and Mnih, Andriy},
  booktitle={ICML},
  year={2018},
}

@inproceedings{locatello2019challenging,
  title={Challenging common assumptions in the unsupervised learning of disentangled representations},
  author={Locatello, Francesco and Bauer, Stefan and Lucic, Mario and Raetsch, Gunnar and Gelly, Sylvain and Sch{\"o}lkopf, Bernhard and Bachem, Olivier},
  booktitle={ICML},
  year={2019},
}

@inproceedings{rolinek2019variational,
  title={{Variational Autoencoders Pursue PCA Directions (by Accident)}},
  author={Rolinek, Michal and Zietlow, Dominik and Martius, Georg},
  booktitle={CVPR},
  year={2019}
}

@inproceedings{kumar2020implicit,
  title={{On Implicit Regularization in $\beta$-VAEs}},
  author={Kumar, Abhishek and Poole, Ben},
  booktitle={ICML},
  year={2020},
}

@inproceedings{zietlow2021demystifying,
  title={Demystifying inductive biases for (beta-) vae based architectures},
  author={Zietlow, Dominik and Rolinek, Michal and Martius, Georg},
  booktitle={ICML},
  year={2021},
}

@inproceedings{bhowalvariational,
  title={Why do Variational Autoencoders Really Promote Disentanglement?},
  author={Bhowal, Pratik and Soni, Achint and Rambhatla, Sirisha},
  booktitle={ICML},
  year={2024},
}

@inproceedings{pan2023geometric,
  title={Geometric inductive biases for identifiable unsupervised learning of disentangled representations},
  author={Pan, Ziqi and Niu, Li and Zhang, Liqing},
  booktitle={AAAI},
  year={2023}
}

@inproceedings{chadebec2022geometric,
  title={A geometric perspective on variational autoencoders},
  author={Chadebec, Cl{\'e}ment and Allassonni{\`e}re, St{\'e}phanie},
  booktitle={NeurIPS},
  year={2022}
}

@inproceedings{reizinger2022embrace,
  title={Embrace the gap: VAEs perform independent mechanism analysis},
  author={Reizinger, Patrik and Gresele, Luigi and Brady, Jack and Von K{\"u}gelgen, Julius and Zietlow, Dominik and Sch{\"o}lkopf, Bernhard and Martius, Georg and Brendel, Wieland and Besserve, Michel},
  booktitle={NeurIPS},
  year={2022}
}

@inproceedings{gresele2021independent,
  title={Independent mechanism analysis, a new concept?},
  author={Gresele, Luigi and Von K{\"u}gelgen, Julius and Stimper, Vincent and Sch{\"o}lkopf, Bernhard and Besserve, Michel},
  booktitle={NeurIPS},
  year={2021}
}

@inproceedings{lucas2019don,
  title={{Don't Blame the ELBO! a Linear VAE Perspective on Posterior Collapse}},
  author={Lucas, James and Tucker, George and Grosse, Roger B and Norouzi, Mohammad},
  booktitle={NeurIPS},
  year={2019}
}

@inproceedings{bao2020regularized,
  title={Regularized linear autoencoders recover the principal components, eventually},
  author={Bao, Xuchan and Lucas, James and Sachdeva, Sushant and Grosse, Roger B},
  booktitle={NeurIPS},
  year={2020}
}

@inproceedings{koehler2021variational,
  title={Variational autoencoders in the presence of low-dimensional data: landscape and implicit bias},
  author={Koehler, Frederic and Mehta, Viraj and Zhou, Chenghui and Risteski, Andrej},
  booktitle={ICLR},
  year={2022}
}

@inproceedings{bowman2015generating,
  title={Generating sentences from a continuous space},
  author={Bowman, Samuel R and Vilnis, Luke and Vinyals, Oriol and Dai, Andrew M and Jozefowicz, Rafal and Bengio, Samy},
  booktitle={Conference on Computational Natural Language Learning},
  year={2015}
}

@article{ramesh2018spectral,
  title={A spectral regularizer for unsupervised disentanglement},
  author={Ramesh, Aditya and Choi, Youngduck and LeCun, Yann},
  journal={arXiv preprint arXiv:1812.01161},
  year={2018}
}

@inproceedings{wei2021orthogonal,
  title={Orthogonal jacobian regularization for unsupervised disentanglement in image generation},
  author={Wei, Yuxiang and Shi, Yupeng and Liu, Xiao and Ji, Zhilong and Gao, Yuan and Wu, Zhongqin and Zuo, Wangmeng},
  booktitle={ICCV},
  year={2021}
}

@inproceedings{peebles2020hessian,
  title={The hessian penalty: A weak prior for unsupervised disentanglement},
  author={Peebles, William and Peebles, John and Zhu, Jun-Yan and Efros, Alexei and Torralba, Antonio},
  booktitle={ECCV},
  year={2020},
}

@article{rezende2018taming,
  title={Taming vaes},
  author={Rezende, Danilo Jimenez and Viola, Fabio},
  journal={arXiv preprint arXiv:1810.00597},
  year={2018}
}

@inproceedings{yang2023disdiff,
  title={DisDiff: unsupervised disentanglement of diffusion probabilistic models},
  author={Yang, Tao and Wang, Yuwang and Lu, Yan and Zheng, Nanning},
  booktitle={NeurIPS},
  year={2023}
}

@inproceedings{zhang2022unsupervised,
  title={Unsupervised representation learning from pre-trained diffusion probabilistic models},
  author={Zhang, Zijian and Zhao, Zhou and Lin, Zhijie},
  booktitle={NeurIPS},
  year={2022}
}

@inproceedings{pandeydiffusevae,
  title={DiffuseVAE: Efficient, Controllable and High-Fidelity Generation from Low-Dimensional Latents},
  author={Pandey, Kushagra and Mukherjee, Avideep and Rai, Piyush and Kumar, Abhishek},
  booktitle={TMLR},
  year={2022}
}

@inproceedings{bizeul2024probabilistic,
  title={{A Probabilistic Model to explain Self-Supervised Representation Learning}},
  author={Bizeul, Alice and Sch{\"o}lkopf, Bernhard and Allen, Carl},
  booktitle={TMLR},
  year={2024}
}

@article{dhuliawala2023variational,
  title={{Variational Classification}},
  author={Dhuliawala, Shehzaad and Sachan, Mrinmaya and Allen, Carl},
  journal={TMLR},
  year={2024}
}

@article{opper2009variational,
  title={The Variational Gaussian Approximation Revisited},
  author={Opper, Manfred and Archambeau, C{\'e}dric},
  journal={Neural computation},
  volume={21},
  number={3},
  pages={786--792},
  year={2009},
  publisher={MIT Press}
}

@inproceedings{buchholz2022function,
  title={Function classes for identifiable nonlinear independent component analysis},
  author={Buchholz, Simon and Besserve, Michel and Sch{\"o}lkopf, Bernhard},
  booktitle={NeurIPS},
  year={2022}
}

@article{buchholz2025robustness,
  title={Robustness of nonlinear representation learning},
  author={Buchholz, Simon and Sch{\"o}lkopf, Bernhard},
  journal={arXiv preprint arXiv:2503.15355},
  year={2025}
}

@article{gan,
  title={Generative adversarial nets},
  author={Goodfellow, Ian and Pouget-Abadie, Jean and Mirza, Mehdi and Xu, Bing and Warde-Farley, David and Ozair, Sherjil and Courville, Aaron and Bengio, Yoshua},
  journal={NeurIPS},
  year={2014}
}

@inproceedings{stablediff,
  title={High-resolution image synthesis with latent diffusion models},
  author={Rombach, Robin and Blattmann, Andreas and Lorenz, Dominik and Esser, Patrick and Ommer, Bj{\"o}rn},
  booktitle={CVPR},
  year={2022}
}

@inproceedings{khemakhem2020variational,
  title={Variational autoencoders and nonlinear ica: A unifying framework},
  author={Khemakhem, Ilyes and Kingma, Diederik and Monti, Ricardo and Hyvarinen, Aapo},
  booktitle={AIStats},
  year={2020},
}

@inproceedings{arvanitidis2018latent,
  title={Latent space oddity: On the curvature of deep generative models},
  author={Arvanitidis, Georgios and Hansen, Lars Kai and Hauberg, S{\o}ren},
  booktitle={ICLR},
  year={2018}
}

@inproceedings{hyvarinen1998noisy,
  title={Noisy independent component analysis, maximum likelihood estimation, and competitive learning},
  author={Hyvarinen, A},
  booktitle={1998 IEEE International Joint Conference on Neural Networks Proceedings. IEEE World Congress on Computational Intelligence (Cat. No. 98CH36227)},
  volume={3},
  pages={2282--2287},
  year={1998},
  organization={IEEE}
}

@inproceedings{brady2023provably,
  title={Provably learning object-centric representations},
  author={Brady, Jack and Zimmermann, Roland S and Sharma, Yash and Sch{\"o}lkopf, Bernhard and Von K{\"u}gelgen, Julius and Brendel, Wieland},
  booktitle={International Conference on Machine Learning},
  pages={3038--3062},
  year={2023},
  organization={PMLR}
}

@article{lachapelle2023additive,
  title={Additive decoders for latent variables identification and cartesian-product extrapolation},
  author={Lachapelle, S{\'e}bastien and Mahajan, Divyat and Mitliagkas, Ioannis and Lacoste-Julien, Simon},
  journal={Advances in Neural Information Processing Systems},
  volume={36},
  pages={25112--25150},
  year={2023}
}
\bibliographystyle{icml2026}

\clearpage
\appendix

\section{Singular Vector Paths \& Seams}\label{app:defs}

\begin{restatable}[\textbf{$\bm{i}$-th singular–vector path}]{definition}{defsing}\label{def:sv-path}
    Let $g\cl:\gZ\to\gX$ be \emph{c.i.d.a.e.}. For  $z^*\clin\gZ_{\mathrm{reg}}$, $i\clin\{1,\ldots,d\}$,  
    the $i$-th \emph{singular–vector path} (\textbf{s.v.\ path}) through $z^*$ is any $C^1$ curve $t\mapsto z^i_t$ with $z^i_0=z^*$ satisfying
    \[
    \tfrac{d}{dt} z^i_t \;=\; \vv^{i}(z^i_t)
    \qquad\text{for } t \text{ in its maximal interval  $I_{z^*,i} \subseteq\R$\,.}
    \]
    We denote the path set by $\gV^i_{z^*}\doteq\{ z^i_t:\; t\in I_{z^*,i}\}\subseteq \gZ_{\mathrm{reg}}$.\footnote{Different choices of sign for $\vv^{i}$ reverse the time direction ($t\mapsto -t$) but generate the same path set.} (See \Figref{fig:seams}, \textit{left}, dash blue lines).
\end{restatable}

\begin{restatable}[\textbf{$\bm{i}$-th seam}]{definition}{defseam}\label{def:seams}
    Let $g\cl:\gZ\to\gX$ be \emph{c.i.d.a.e.} with manifold $\gM_g=\{g(z)\}$.  
    For $z^*\clin\gZ_{\mathrm{reg}}$, $i\clin\{1,\ldots,d\}$, the $i$-th \textbf{seam} through $g(z^*)$ is any $C^1$ curve $t\mapsto x^i_t$ in $\gM_g$ with $x^i_0=g(z^*)$ satisfying
    \vspace{-4pt}
    \[
    \tfrac{d}{dt} x^i_t \;=\; s^{i}\!\big(g^{-1}(x^i_t)\big)\,\vu^{i}\!\big(g^{-1}(x^i_t)\big)
    \quad\text{for $t$ in $I_{z^*,i}$\,.
    }
    \]
    We denote the path set $\gM^{i}_{g,z^*} \doteq\{ x^i_t:\; t\in I_{z^*,i}\}\subseteq\gM_g$ and define \emph{seam coordinate}
    \vspace{-2pt}
    \begin{align}
        u_i(t) &\;\doteq\; \int_{0}^{t} s^{i}\!\big(g^{-1}(x^i_\tau)\big)\, d\tau,
        \nonumber\\
        \text{so}\quad \tfrac{d}{dt}u_i(t) &\;=\; s^{i}\big(g^{-1}(x^i_t)\big),\ \ u_i(0)=0.
    \end{align}
    $u_i$ measures position along the seam in units of $s^{i}$  (strictly monotone as $s^{i}\cl>0$). (See \Figref{fig:seams}, \textit{right}).
\end{restatable}

\begin{restatable}[\textbf{Paths $\mapsto$ seams}]{lemma}{lempath}\label{lem:seams}
    Let $g\cl:\gZ\to\gX$ be \emph{c.i.d.a.e.}, $z^*\clin\gZ_{\mathrm{reg}}$, $i\clin\{1,\ldots,d\}$, and let $\gV^i_{z^*}$ be the $i$-th s.v.\ path through $z^*$.
    Then the image of $\gV^i_{z^*}$ under $g$ is the $i$-th seam through $g(z^*)$:
    $\gM^{i}_{g,z^*} \eq \{g(z) : z\in \gV^i_{z^*}\}$.
\end{restatable}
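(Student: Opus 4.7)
The plan is to push the s.v.\ path forward through $g$ and verify directly that the resulting curve satisfies the ODE defining the $i$-th seam, so that equality of path sets follows from uniqueness of integral curves. Concretely, let $z^i_t$ be the s.v.\ path through $z^*$ on its maximal interval $I_{z^*,i}$ and set $x_t \cl\doteq g(z^i_t)$. Then $x_0 \eq g(z^*)$ matches the seam's initial condition, and it suffices to check that $x_t$ satisfies the seam ODE in Definition~\ref{def:seams}, after which the image $g(\gV^i_{z^*})$ coincides with $\gM^i_{g,z^*}$.

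The key step is a one-line chain-rule computation combined with the SVD. Since $g$ is differentiable wherever $z^i_t\clin\gZ_{\mathrm{reg}}$,
\begin{equation*}
    \tfrac{d}{dt} x_t
    \;=\; \mJ_{z^i_t}\,\tfrac{d}{dt}z^i_t
    \;=\; \mJ_{z^i_t}\,\vv^{i}(z^i_t),
\end{equation*}
where the second equality is the definition of the s.v.\ path. Writing $\mJ_{z} \eq \mU_z\mS_z\mV_z^\top$ and noting that $\mV_z^\top\vv^{i}(z)$ is the $i$-th standard basis vector (columns of $\mV_z$ are orthonormal), we obtain $\mJ_{z}\vv^{i}(z) \eq s^{i}(z)\,\vu^{i}(z)$. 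By injectivity of $g$, we can rewrite $z^i_t \eq g\inv(x_t)$ on the image, giving $\tfrac{d}{dt}x_t \eq s^{i}(g\inv(x_t))\,\vu^{i}(g\inv(x_t))$, which is exactly the seam ODE. The image $g(\gV^i_{z^*})$ thus traces an integral curve of the seam vector field starting at $g(z^*)$.

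To conclude set equality rather than mere inclusion, I would invoke uniqueness of ODE integral curves. On each connected component of $\gZ_{\mathrm{reg}}$, Definition~\ref{def:regular-set} provides a continuous choice of $s^i,\vu^i,\vv^i$, and under the standing regularity (say, $g\clin C^2$ on the component, which is the natural setting for the SVD vector fields to be locally Lipschitz) both the s.v.\ ODE in $\gZ$ and the seam ODE on $\gM_g$ admit unique maximal solutions through their initial points. The forward map $z\mapsto g(z)$ is a diffeomorphism from the s.v.\ path set onto its image (injective with invertible Jacobian), and sends the s.v.\ integral curve to a seam integral curve, so the maximal intervals match and the two curves coincide as sets, i.e.\ $g(\gV^i_{z^*}) \eq \gM^i_{g,z^*}$.

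The main obstacle I expect is not the calculation but the regularity bookkeeping: ensuring that the s.v.\ and seam vector fields are regular enough (e.g.\ locally Lipschitz) to invoke Picard--Lindel\"of, and that the maximal interval $I_{z^*,i}$ in $\gZ_{\mathrm{reg}}$ really does correspond to the maximal interval of the seam in $\gM_g\cap g(\gZ_{\mathrm{reg}})$---in particular, that the integral curve does not exit $\gZ_{\mathrm{reg}}$ in $\gZ$ while its image remains in the regular part of $\gM_g$. This is handled by restricting, as Definition~\ref{def:regular-set} already does, to a fixed connected component of $\gZ_{\mathrm{reg}}$ where the continuous SVD is globally defined; on such a component the diffeomorphism $g$ transports maximal intervals faithfully, and the stated equality is immediate.
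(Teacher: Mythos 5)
Your proposal is correct and matches the paper's proof, whose entire content is the same chain-rule/SVD computation $\tfrac{d}{dt}g(z^i_t)=\mJ_{z^i_t}\vv^{i}(z^i_t)=s^{i}(z^i_t)\,\vu^{i}(z^i_t)$ verifying the seam ODE of D\ref{def:seams}. The additional discussion of uniqueness of integral curves and transport of maximal intervals is extra rigor the paper leaves implicit, but it does not change the argument.
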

\vspace{-12pt}
\begin{proof}
For $x^i_t\cl\doteq g(z^i_t)$, by the chain rule and SVD:
$
\tfrac{dx^i_t}{dt}
= \mJ_{z^i_t}\tfrac{dz^i_t}{dt}
= \mJ_{z^i_t}\vv^{i}(z^i_t)
= s^{i}(z^i_t)\,\vu^{i}(z^i_t)
$,
so $x^i_t$ satisfies Def.~\ref{def:seams}.
\end{proof}

\thmseamfactorisation*

\label{app:proof_seam_factorisation}
\begin{proof}
By a standard change–of–variables (on embedded manifolds) and $|\mJ_z^\top\mJ_z|=\prod_{i=1}^d s^{i}(z)^2$,
\[
p_\mu\!\big(g(z)\big)
\;=\;
\det(\mJ_z^\top \mJ_z)^{\mathsmaller{-}^1\!/\!_2}\,p(z)
\;=\;
\tfrac{\prod_{i=1}^d p_i(z_i)}{\prod_{i=1}^d s^{i}(z)}\,,
\]
yielding \Eqref{eq:pf-general-product}.
Under C1, s.v.-paths are axis aligned in latent space and map to seams. For each $i$ and $x=g(z)$, the change–of–variables formula along $\gM^{i}_{g,z}$ ($i$-th seam through $x$ with coordinate $u_i$, Def.~\ref{def:seams})  at $t=0$ 
 gives the local pushed 1-D \textit{seam–density}
\begin{equation}
\label{eq:seam-1d-no-inverse}
f^{(z)}_i\!\big(u_i(t)\big)
\;\doteq\;
\tfrac{p_i\!([z^{i}_t]_i)}{\,s^{i}\!(z^{i}_t)}\,,
\qquad
\tfrac{d}{dt}u_i(t)=s^{i}\!(z^{i}_t),\ \ u_i(0)=0,
\end{equation}
where $t\mapsto z^{i}_t$ is the $i$-th singular–vector path through $z$ (Def.~\ref{def:sv-path}) with co-ordinate $z^{i}_i(t)$. 
Evaluating at $t=0$: 
$f^{(z)}_i\!\big(u_i(0)\big) = \tfrac{p_i(z_i)}{s^{i}(z)}$,
shows that seam-densities are the factors of \Eqref{eq:pf-general-product}.
\end{proof}

\section{Disentanglement}

\label{sec:non-linear-disentanglement}

\thmnonlinear*

\begin{proof}
    \!(C1/2\,$\Rightarrow$D1) By Thm.~\ref{thm:seam-factorisation-general}, the manifold density factorises pointwise as
    $p_\mu(g(z)) \eq \prod_{i=1}^d \!\tfrac{p_i(z_i)}{s^{i}(z)}$.
    By \textbf{C1}, $\mV_z\eq\mI$ for all $z$, so the $i$-th singular–vector path through $z$ is exactly the axis–aligned line $\{z': [z']_i\ \text{varies},\, [z']_{\neg i}\eq[z]_{\neg i}\}$; by \Cref{lem:seams} its image is the $i$-th seam through $x\eq g(z)$ following $\vu^{i}$.
    By \textbf{C2}, $s^{i}(z)$ depends only on $z_i$. Define the seam coordinate $u_i$ along the $i$-th seam as in D\ref{def:seams}; then $u_i$ is a strictly monotone function of $z_i$, hence the 1-D push–forward of $p_i$ along that seam is
    $f_i(u_i)=\big|\tfrac{du_i}{dz_i}\big|\inv p_i(z_i)=\tfrac{p_i(z_i)}{s^{i}(z_i)}$,
    Thus $p_\mu(g(z))=\prod_i f_i(u_i(z))$ with each $f_i$ evaluated on the $i$-th seam.
    Finally, since $u_i$ is monotone in $z_i$ and $\{z_i\}$ are independent, the random variables $\{u_i\}$ are independent; hence factors $\{f_i(u_i)\}$ are \textit{statistically independent} as required by D\ref{def:disentanglement}.

    (D1$\Rightarrow$\,C1/2) Assume $p_\mu$ is disentangled under $g$. By D\ref{def:disentanglement}, each factor $f_i$ is obtained by pushing forward $p(z_i)$ over an axis-aligned line in direction $\vz^i$, and that line's image follows $\mJ_z\vz^i \eq \mJ_z^i$ at $g(z)$  (column $i$ of $\mJ_z$) with 1-D density $f_i \eq |\mJ_z^i|\inv p(z_i)$.
    Thus, 
        $p(g(z)) \eq |\mJ_z|\inv p(z)$ 
    by standard change-of-variables but also 
        $p(g(z)) \eq \prod_i f_i = \prod_i |\mJ_z^i|\inv p(z_i)$, 
    by factor independence. 
    Hence $|\mJ_z| = \prod_i |\mJ_z^i|$, which occurs if and only if columns $\mJ_z^i$ are orthogonal,
    i.e.\ $\mJ_z^\top\mJ_z$ is diagonal or in SVD terms $\mJ_z \eq \mU_z\mS_z$ (\textbf{C1}). 
    
    Since $s_i\cl\doteq[\mS_{z}]_{ii}\eq|\mJ_z^i|$ then $f_i\eq p(z_i)/s_i$. Each factor $f_i$ is a function of $z_i$, so by independence, for $i\cl\ne j$, 
        $0 \eq \pderiv{\log f_i}{z_j} 
            \eq \pderiv{\log p_i}{z_j} - \pderiv{\log s_i}{z_j} 
            \eq - \pderiv{\log s_i}{z_j}$ (since $z_i$ are independent), 
    implying $\pderiv{s_i}{z_j}\eq 0$ for $i\cl\ne j$ (\textbf{C2}).
\end{proof}

\clearpage
\section{Proof of Decoder Derivative Constraints}
\label{app:proof_decoder_constraints}

\assnoncancellation*

\vspace{6pt}
\lemdiagterms*
\begin{proof}
    \keypoint{\textit{(Preliminaries)}}
    Recall $p(z)=\gN(0,\mI)$ and $p(x\!\mid\! z)=\gN(x;d(z),\sigma^2\mI)$. Let $q(z\!\mid\! x)=\gN(z;e(x),\Sigma_x)$ be the trained posterior with $\Sigma_x$ diagonal (by assumption). 
    Denote the SVD of the decoder Jacobian by
    \begin{gather}
        \mJ_z = \mU_z\,\mS_z\,\mV_z^\top,\quad 
        \mU_z\in\R^{m\times d},\quad 
        \mV_z\in\R^{d\times d}, 
        \nonumber\\
        \mS_z=\Diag(s_1(z),\ldots,s_d(z)),\quad s_i(z)>0.
    \end{gather}
    Assume full column rank on the manifold ($s_i(z)>0$).
    %
    For Gaussian likelihood with variance $\sigma^2$, the Hessian of the log-likelihood w.r.t.\ $z$ can be written
    %
    $$\nabla_z^2 \log p(x\!\mid\! z)
    \;=\;
    -\tfrac{1}{\sigma^2}\big(\mJ_z^\top \mJ_z 
    - \sum_{\ell=1}^n r(z)_\ell\, \mH_\ell(z)\big),$$
    %
    where $r(z)=x-d(z)$ and $\mH_\ell(z)\in\R^{d\times d}$ is the Hessian of the $\ell$-th decoder coordinate, $[\mH_\ell]_{pq}=\partial^2 d_\ell/\partial z_p\,\partial z_q$.
    Combined with the Opper–Archambeau fixed‑point yields
    \begin{align}\label{eq:oa}
        \Sigma_x^{-1}
            &\,=\, -\,\E_{q}\!\big[ \nabla_z^2\log p(x,z) \big] 
            \nonumber\\
            &\,=\, -\,\E_{q}\!\big[
                \nabla_z^2\log p(z) + \nabla_z^2\log p(x\!\!\mid\!\! z)
                \big]
            \nonumber\\
            &\,=\,\mI 
                - \E_{q}\!\big[\nabla_z^2 \log p(x\!\mid\! z)\big]
            \nonumber\\
            &\,=\,\mI 
                + \tfrac{1}{\sigma^2}\E_{q}\!\big[\mJ_z^\top\mJ_z\big]
                 - 
                \tfrac{1}{\sigma^2}\E_{q}\!\big[\sum_{\ell=1}^n r(z)_\ell\, \mH_\ell(z)\big].
    \end{align}
    
    \vspace{-10pt}
    \keypoint{(\textit{C1})} 
    For diagonal $\Sigma_x$ in \Eqref{eq:oa} and $z$ concentrated around $e(x)$ under P\ref{prp:non-cancellation} (``no cancellation''),  we have
    \begin{equation}
    \label{eq:diagonal_terms}
    \mJ_z^\top\mJ_z
    \ \text{ is diagonal},
        \quad
    \sum_{\ell=1}^n r(z)_\ell\, \mH_\ell(z)
    \ \text{ is diagonal}.
    \end{equation}
    Diagonal $\mJ_z^\top \mJ_z\!\eq\Diag(s_1^2,\ldots,s_k^2)$ implies right singular vectors $\vv^i$ are the standard basis, up to signed permutations. By relabelling latent axes and absorbing signs,
    $\mV_z=\mI$ for all $z$ visited by the encoder, establishing \textbf{C1}. 
    
    \keypoint{\textit{(Directed Hessian is Tangent to Manifold)}}
    For a trained model, $d(e(x))\cl\approx x$ is assumed to be small. For $z$ concentrated around $e(x)$, i.e.\ $z=e(x)+\delta$ with $\delta\cl>0$ small, a first‑order Taylor expansion gives
    \begin{align}        
        d(z) &\,=\, d(e(x)) + \mJ_{e(x)}\,\delta + O(\|\delta\|^2)
        \nonumber\\
        \Rightarrow\quad
        r(z) &\,\doteq\,x-d(z)\;\approx\;-\mJ_{e(x)}\,\delta + O(\|\delta\|^2).
    \end{align}
    Thus, to first order, for $z$ concentrated around $e(x)$, $r(z)$ lie in the column space of $\mJ_{e(x)}$, i.e.\ in the span of the left singular vectors $\{\!u_i(z)\!\}_{i=1}^k$ (columns of $\mU_z$):
    \begin{equation}
    \label{eq:r-tangent}
    r\;=\; \mU_z a,\qquad a\in\R^k
    \end{equation}
    
    Hence, \Eqref{eq:diagonal_terms} implies that $\sum_{\ell=1}^n r_\ell\, \mH_\ell(z) $ is diagonal for all $r\clin\text{span}(\mU_z)$. In particular,
    %
    $\sum_{\ell=1}^n u^\top_{i\ell}\, \mH_\ell(z)$ is diagonal for all rows $u_i$ of $\mU_z$ and, by definition of slices $\mH_\ell$ and $\mJ_z$,
    \begin{equation}
        \label{eq:hessian_equivs}
        [\sum_{\ell=1}^n u^\top_{i\ell}\, \mH_\ell(z)]_{p q} 
         \ =\  \sum_{\ell=1}^n u^\top_{i\ell}\, \pderiv{^2d_\ell}{z_p z_q} 
         \ =\  (\mU_z^\top \pderiv{\mJ_z}{z_p})_{iq}    
    \end{equation}

    \vspace{-10pt}
    \keypoint{\textit{(Diagonality of $(\bm{\frac{\partial s_i}{\partial z_j})_{i,j}}$})}
    
    Differentiating 
        $\mU_z^\top \mU_z \eq \mI$ 
    to get
        $\tfrac{\partial \mU_z^\top}{\partial z_j}\mU_z \cl+ \mU_z^\top \tfrac{\partial \mU_z}{\partial z_j} \eq \mathbf{0}$
    shows 
        $\Omega_j(z) \cl{:=} \mU_z^\top\tfrac{\partial \mU_z}{\partial z_j} \clin \R^{d\times d}$ 
    is skew-symmetric.
    Differentiating $\mJ_z \eq \mU_z\,\mS_z$ w.r.t.\ $z_j$ and premultiplying by $\mU_z^\top$ then gives
    \begin{equation}\label{eq:UtdJ}
    \mU_z^\top\tfrac{\partial \mJ_z}{\partial z_j}
    \;=\;
    \Omega_j(z)\,\mS_z \;+\; \tfrac{\partial \mS_z}{\partial z_j}.
    \end{equation}
    Since $\Omega_j(z)$ is skew-symmetric, all diagonal entries are zero; since $\tfrac{\partial \mS_z}{\partial z_j}$ is diagonal, all non-diagonal entries are zero. Thus, of respective entries, only one is non-zero and can be considered separately. 
    
    From \Eqref{eq:hessian_equivs}, only $(\Omega_j(z)\,\mS_z)_{:j}$ elements can be non-zero, thus \textit{all} elements of  $\Omega_j(z)$ must be zero (by skew-sym.), \textbf{ruling out rotation in the tangent plane}\footnote{
        we have: $\Omega_j(z)_{kj} = -\Omega_j(z)_{jk} = 0$, if $j\cl\ne k$;  and $\Omega_j(z)_{jj} = 0$ (skew-sym.).}
    
    For $\tfrac{\partial \mS_z}{\partial z_j}$, diagonality (only $(\tfrac{\partial \mS_z}{\partial z_j})_{kk}$ elements non-zero) and \Eqref{eq:hessian_equivs} (only $(\tfrac{\partial \mS_z}{\partial z_j})_{:j}$ elements non-zero) imply that only elements $(\tfrac{\partial \mS_z}{\partial z_j})_{jj}=\pderiv{s_j}{z_j}$ can be non-zero, eliminating mixed partials  
    \[
    \tfrac{\partial s_i}{\partial z_j}(z) \;=\; 0 \quad\text{for all } i\neq j,
    \]
    i.e.\ the Jacobian of the singular-value map $s(z)=(s_1(z),\dots,s_k(z))$ is \textit{diagonal}, proving \textbf{C2}.    
\end{proof}

\begin{figure*}[!htbp]
    \centering
    \begin{subfigure}{0.85\textwidth}
        \centering
        \hspace{-20pt}
        \includegraphics[width=\linewidth]{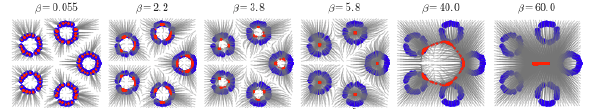}
    \end{subfigure}%
    \begin{subfigure}{.2\textwidth}
        \centering
        \hspace{-30pt}
        \scalebox{1.}{\includegraphics[width=0.8\linewidth]{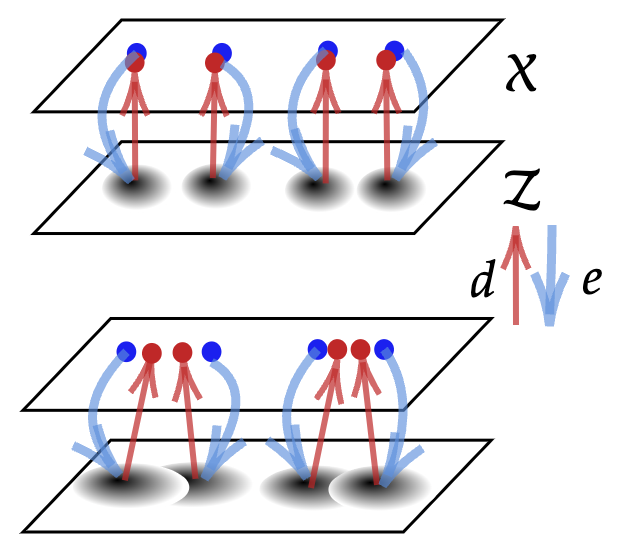}}
    \end{subfigure}%
    \caption{Illustrating $\beta\cl\propto\Var[x|z]$ (blue $\eq$ data, red $\eq$ reconstruction):
    (\textit{l}) For low $\beta$ ($\beta\eq0.55$), $\Var[x|z]$ is low (by \Eqref{eq:real_relationship}), and data must be well reconstructed (right, top). As $\beta$ increases, $\Var[x|z]$ and so $\Var[z|x]$ increase, and posteriors of nearby samples $\{x_i\}_i$ increasingly overlap (right, bottom). For $z$ in overlapping $\{q(z|x_i)\}_i$, the decoder $\E[x|z]$ maps to a weighted average of $\{x_i\}_i$. Initially, close neighbours reconstruct to their mean ($\beta\eq 2.2, 3.8$), then small circles ``become neighbours'' and map to their centres. Finally ($\beta\eq 60$), all samples reconstruct to the global centroid. \citep[reproduced with permission from][]{rezende2018taming}
    (\textit{r}) illustrating posterior overlap, (\textit{t}) low $\beta$, (\textit{b}) higher $\beta$.
    }
    
    \label{fig:taming}
    \vspace{-12pt}
\end{figure*}

\section{$\beta$ controls Noise Variance}
\label{app:beta}
Choosing $\beta\cl>1$ in \Eqref{eq:elbo} can enhance disentanglement \citep{betavae, burgess2018understanding} and has been viewed as re-weighting ELBO components or as a Lagrange multiplier. We show that $\beta$ implicitly controls the likelihood's  variance and that the ``$\beta$-ELBO'' remains a valid objective.

Dividing the ELBO by a constant and suitably adjusting the learning rate leaves the VAE training algorithm unchanged, hence consider \Eqref{eq:elbo} divided through by $\beta$ with the log likelihood scaled by $\beta\inv$. For a Gaussian VAE with $\Var[x|z]\eq\sigma^2$, this exactly equates to a standard VAE with variance $\beta\sigma^2$ \citep{lucas2019don}.
More generally, scaling the log likelihood by $\beta\inv$ is equivalent to an \textit{implicit likelihood} $p_\theta(x|z)^{^{_1}\cl/_\beta}$\!, where $\beta$ acts as a \textit{temperature} parameter: 
    $\beta\cl\to\infty$ increases the effective entropy towards uniform (the model assumes more noise in the data, fitting more loosely), and $\beta\cl\to 0$ reduces it to a delta (reconstructions should be tight). Optimal posteriors fit to the implicit likelihood,
$q_\phi(z|x) \propto p_\theta(x|z)^{^{_1}\cl/_\beta}p(z)$,
which thus dilate ($\beta>1$) or concentrate ($\beta<1$). This generalises the Gaussian result \citep{lucas2019don}, showing that the $\beta$-ELBO is simply the ELBO for a different likelihood model.\footnote{
    Technically, the $\beta$-ELBO's value is incorrect without renormalising the implicit likelihood, but that is typically irrelevant, e.g.\ for commonly used Gaussian likelihoods, only the quadratic "MSE" term appears in the loss.} 
    
\keypoint{Empirical support} Our claim, in effect that $\Var[x|z]\cl\propto\beta$, is well illustrated on synthetic data in \Figref{fig:taming} \citep[][see caption for details]{rezende2018taming}. It also immediately explains \textit{blur in $\beta$-VAEs} since $\beta\cl>1$ simply assumes more noise.
It also explains why $\beta\cl<1$ helps mitigate \textit{posterior collapse} \citep{bowman2015generating}, i.e.\ when a VAE's likelihood is sufficiently expressive that it can directly model the data distribution, $p(x|z) \eq p(x)$, leaving latent variables redundant (posterior ``collapses'' to prior). 
As $\beta\cl\to0$, the effective variance of $p_\theta(x|z)$, and the distributions it can describe, reduces. Thus for some $\beta\cl<1$ the effective variance falls below $\Var[x]$, rendering posterior collapse impossible as some variance in $x$ can only be explained by $z$.
Thus our claim that $\beta$ controls effective variance explains well-known empirical observations, which in turn provide empirical support for the claim.

\section{Identifiability Proofs}

\subsection{Proof of Linear VAE Identifiability}\label{app:proof_identifiability_LVAE}

\corlinear*

\vspace{-12pt}
\begin{proof}
\textbf{\textit{(Ground truth)}} If $x\eq \mW z$, then letting $u\cl\doteq \mU^\top x$ and $v\cl\doteq\mV^\top z$, gives $u\eq \mS v$, $u_i\eq s_iv_i$. 
Since $z\!\sim\!\gN(0,\mI)$ and $\mV$ is orthonormal, $v\!\sim\!\gN(0,\mI)$, 
hence $\{v_i\}$, and $\{u_i\}$ are mutually independent: $u_i\!\sim\! \gN(0,s_{i}^2)$ and 
$p_\mu^{\text{(g)}}(x) \eq \prod_{i=1}^d \gN(u_i;0,s_{i}^2)$.

\textbf{\textit{(Model)}} Let $d(z)\eq\mD z$ with SVD $\mD\eq \mU_{\mathsmaller\mD}\mS_{\mathsmaller\mD}\mV_{\mathsmaller\mD}^\top$. 
For an LVAE with diagonal covariances, the Hessian term in  \Eqref{eq:real_relationship} is zero, the expectation is redundant and P\ref{prp:non-cancellation} is trivially satisfied. Thus, by \Cref{lem:diag-terms}, s.v.\ paths  are axis–aligned (C1). Since  C2 is vacuously satisfied, \Cref{thm:non_linear} implies that $p_\mu^{\text{(d)}}$ is disentangled and factorises into statistically independent components along the decoder’s seams (columns of $\mU_{\mathsmaller\mD}$). 
Since $u_{\mathsmaller\mD,i}=s_{\mathsmaller\mD,i} z_i$ and $z_i\sim\gN(0,1)$, each seam factor is Gaussian with variance $s_{\mathsmaller\mD,i}^2$, i.e.\ 
$p_\mu^{\text{(d)}}=\prod_{i=1}^d \gN(u_{\mathsmaller\mD,i};0,s_{\mathsmaller\mD,i}^2)$.

\textbf{\textit{(Matching)}} Equality $p_\mu^{\text{(d)}} \equiv p_\mu^{\text{(g)}}$ and \emph{distinct} $\{s_{i}\}$ imply uniqueness of the Gaussian product decomposition, up to permutation. Thus the LVAE’s independent components (seam factors) match ground-truth components up to permutation/sign, i.e.\ identifiability and disentanglement on $\gM_g$.
\end{proof}

\subsection{Proof of Gaussian VAE Identifiability}\label{app:intrinsic-uniqueness}

\lemintrinsicseams*

\begin{proof}
Fix $x\clin\gM_{\mathrm{reg}}$, and let $\vu^1,\ldots,\vu^d$ be the unit seam directions at
$x$, stacked as columns of $\mU_x\clin\R^{m\times d}$. By assumption, these directions
form an orthonormal basis of $T_x\gM$, with local seam coordinates
$u=(u_1,\ldots,u_d)$.

Since
\[
\log p_\mu(x)=\sum_{i=1}^d \log f_i(u_i(x)),
\]
its mixed second derivatives in the seam coordinates vanish:
\[
\big[\nabla_u^2\log p_\mu(x)\big]_{ij}
=
\begin{cases}
\tfrac{\partial^2}{\partial u_i^2}\log f_i(u_i(x)) & (i=j),\\[4pt]
0 & (i\neq j).
\end{cases}
\]

Because the seam coordinates are orthonormal, the manifold metric in these
coordinates is the identity, so the intrinsic Hessian equals the ordinary Hessian
in the $u$-coordinates at $x$.\footnote{Here “orthonormal seam coordinates” means that the seam coordinate vector fields form an orthonormal coordinate frame on the local patch.} Hence
\[
\tH_x
\;=\;
\mU_x\, [\nabla_u^2\log p_\mu(x)]\, \mU_x^\top
\]
is an eigendecomposition of $\tH_x$.

Therefore the seam directions are eigenvectors of $\tH_x$. Since the eigenvalues are
pairwise distinct a.e., these directions are unique up to permutation and sign.
\end{proof}

\paragraph{Implication for identifiability.}
\Cref{thm:intrinsic-uniqueness} isolates the intrinsic geometry of $p_\mu$: once $p_\mu$ is fixed, the seams and their directions are fixed (P\&S). Any Gaussian VAE decoder $d$ matching $p_\mu$ and satisfying \textbf{C1–C2} must therefore align its singular paths with those seams and inherit the same seam factors.

\lemdecoderfindsseams*

\begin{proof}
Since $p_\mu^{(d)}\equiv p_\mu$, both define the same manifold density on the common
manifold $\gM$.

Under \textbf{C1–C2}, \Cref{thm:seam-factorisation-general} gives
\[
p_\mu(x)
=
\prod_{i=1}^d \frac{p_i(z_i)}{s_i(z)},
\qquad x=d(z),
\]
with each $s_i(z)$ depending only on $z_i$ by \textbf{C2}. Define local seam
coordinates by
\[
u_i(z_i)\doteq \int_0^{z_i} s_i(\zeta)\,d\zeta .
\]
Then each factor depends only on the corresponding seam coordinate:
\[
p_\mu(x)=\prod_{i=1}^d f_i(u_i(x)),
\qquad
f_i(u_i)=\frac{p_i(z_i)}{s_i(z_i)}.
\]

Moreover, by the chain rule and \textbf{C1},
\[
\frac{\partial x}{\partial u_i}
=
\frac{\partial x}{\partial z_i}\frac{\partial z_i}{\partial u_i}
=
\frac{\mJ_z \vz_i}{s_i(z_i)}
=
\vu_i(z).
\]
Hence the seam coordinates induced by $d$ are orthonormal, since
\[
\Big\langle \frac{\partial x}{\partial u_i},\frac{\partial x}{\partial u_j}\Big\rangle
=
\vu_i(z)^\top \vu_j(z)
=
\delta_{ij}.
\]

Thus the assumptions of \Cref{thm:intrinsic-uniqueness} hold, with seam frame
$\mU_z$. By that lemma, the columns of $\mU_z$ are exactly the intrinsic seam
directions, up to P\&S. By \Cref{lem:seams}, singular-vector paths map to seams; and
by \Cref{thm:seam-factorisation-general}, the factor along seam $i$ is the 1-D
push-forward of $p_i(z_i)$.
\end{proof}

Note that the two proofs above adopt a similar technique, but \Cref{thm:intrinsic-uniqueness} is entirely intrinsic to the manifold (hence no mention of a Jacobian), whereas \Cref{cor:intrinsic-uniqueness} is with reference to a parameterisation of the manifold by a function $d$.

\cornonlinear*


\begin{proof}
    (\textbf{Matching $\bm{\mU}$}) Equality $p_\mu^{(g)} \!\equiv p_\mu^{(d)}$ implies the same $\tH_x$. By \Cref{thm:intrinsic-uniqueness}, its eigenvectors are intrinsic and unique (P\&S), so $\mU^{(d)}_z \!= \mU^{(g)}_z$ (P\&S, herein assume indices are relabelled to match). 
    
    (\textbf{Matching $\bm{\mS}$})  In this common basis, under \textbf{C1–C2}, on‑manifold scores are equal:
    \begin{align*}
        \big[\nabla_u\log p_\mu^{(g)}(x)\big]_i 
            \;&=\; \tfrac{1}{s_i^{(g)}(z)}\tfrac{\partial}{\partial z_i}\!\Big(\log p_i(z_i)-\log s_i^{(g)}(z)\Big)
        \nonumber\\
            \;&=\;
        \big[\nabla_u\log p_\mu^{(d)}(x)\big]_i,
    \end{align*}
    hence integrating along seam $i$ (with $z_{\neg i}$ fixed, using equality of $p_\mu$ at a reference point
    to fix the integration constant), 1-D seam factors $\frac{p_i(z_i)}{s_i}$ match. Since $p_i$ is fixed, $s^{(d)}_i\! \eq s^{(g)}_i\!$, i.e.\ $\mS^{(d)}_z \!\eq \mS_z^{(g)}$\!. 
    
    With $\mV^{(d)}_z \eq \mV^{(g)}_z \eq \mI$  by \textbf{C1}, it follows that $\mJ^{(d)}_z\!=\mJ^{(g)}_z$ and the seam decomposition is identified up to P\&S.
\end{proof}

\section{Disentanglement Metrics}\label{app:metrics}

    \keypoint{Axis alignment score (AAS)} Given a matrix of mutual information values, between each latent co-ordinate and each ground truth factor, one can normalise over rows or columns to compute a ``distribution'' of mutual information. 
    
    The entropy of each distribution gives a measure of how narrowly or sparsely information about a ground truth factor is captured across latents or the spread of information about each factor captured by a single latent. In either case, a ``high entropy'' distribution means information is widely spread, while low entropy means information about a factor is concentrated in a single latent, i.e.\ disentangled. 
    
    Entropy of the mutual information distribution can be computed row-wise or column-wise. AAS is a holistic metric combining the intuitions of both options into a single, robust score that evaluates how close matrix M is to a permuted diagonal form (zero entropy, perfect disentanglement).
    
    In a perfectly disentangled MI matrix, the sum of peak values per row equals the sum of peak values per column, and both equal the total sum of the matrix. AAS measures the ratio of the "sum of peaks" to the "total sum":
    \vspace{-4pt}

    \begin{verbatim}
    sum_col_max = sum(max(mut_info, dim=0))
    sum_row_max = sum(max(mut_info, dim=1))
    aas = 0.5 * (sum_row_max + sum_col_max) / sum(mut_info)
    \end{verbatim}

    \vspace{-10pt}
    \keypoint{Normalised off diagonal}
        For gradient terms (here, a $d\times d$ matrix $M$)  we compute a measure of diagonality by computing the ratios of normalised off-diagonal absolute values to on-diagonal values.
    \vspace{-4pt}

    \begin{verbatim}
    d = M.shape[1]
    num_off_diag = d * (d - 1)
    M = abs(M)
    M = diag(M)^(-0.5) * M * diag(M)^(-0.5)       # normalise
    mean_off_diag = (sum(M) - sum(diag(M))) / num_off_diag
    \end{verbatim}
    \vspace{-10pt}

\section{Empirical results on Natural Data (CelebA)}
\label{app:celeba}

This appendix complements \Secref{sec:empirical}, applying the same architecture and training regime used for dSprites (and as in \citet{burgess2018understanding}) to CelebA, a more complex natural dataset without introducing new confounds.
We vary $\beta$ and compare \emph{diagonal} vs \emph{full} posterior covariances, reporting: 
    (i) latent traversals showing the dependence of disentanglement on posterior structure (\Figref{fig:celeba_traversals});
    (ii) how utilisation of the latent space varies with $\beta$ and posterior structure (\Figref{fig:celeba_latent_info});
    (iii) diagonality of the Price/Bonnet derivative terms (\Figref{fig:celeba_heatmaps}); and
    (iv) reconstruction/sampling quality (\ref{fig:celeba_samples}). See captions for details.
\begin{figure*}[h!]
    \centering
    \captionsetup[subfigure]{labelformat=empty,skip=0.05\baselineskip}
    \subfloat[Diagonal posteriors]{\includegraphics[width=0.48\linewidth]{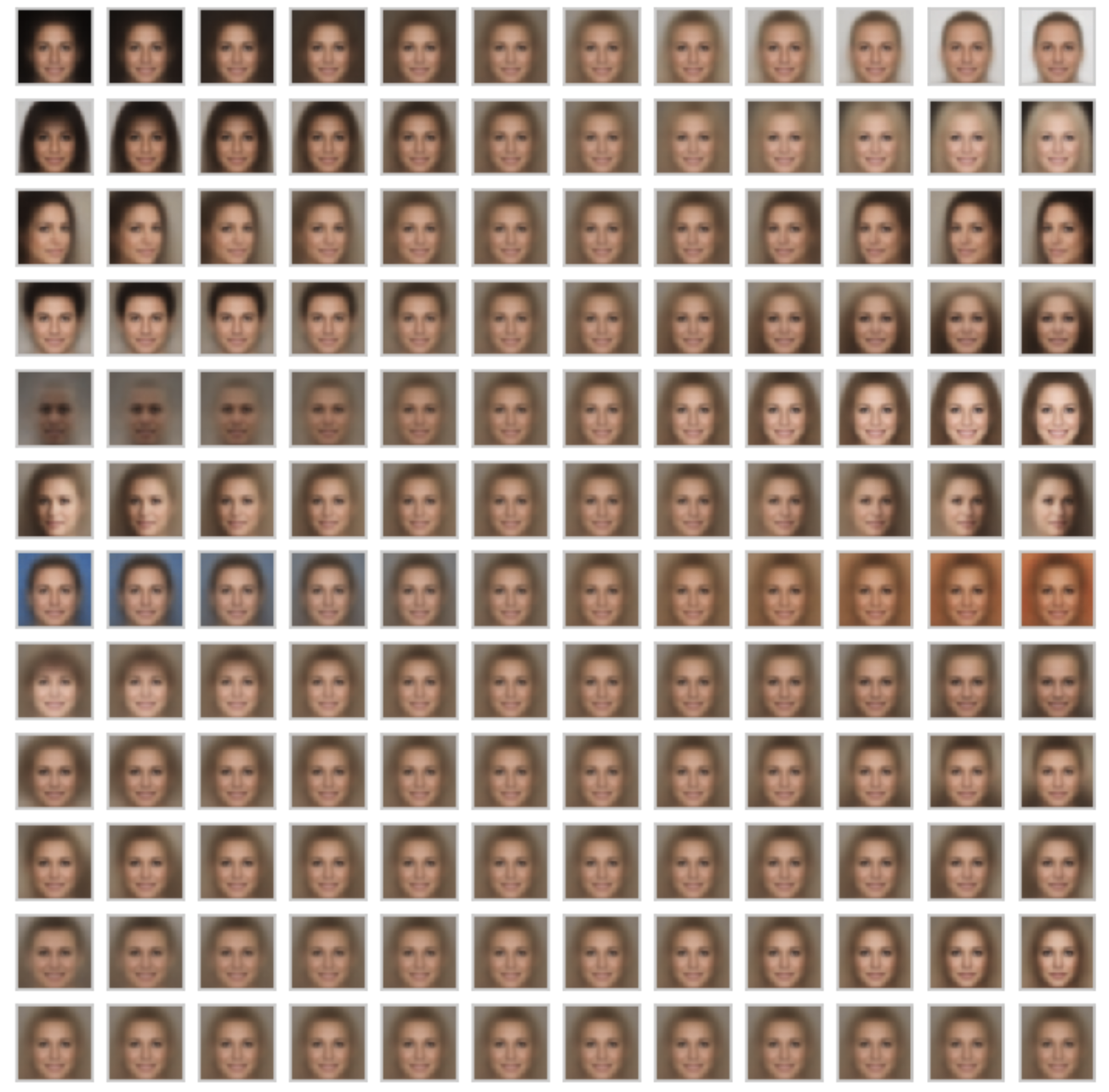}}%
    \hfill
    \subfloat[Full posteriors]{\includegraphics[width=0.48\linewidth]{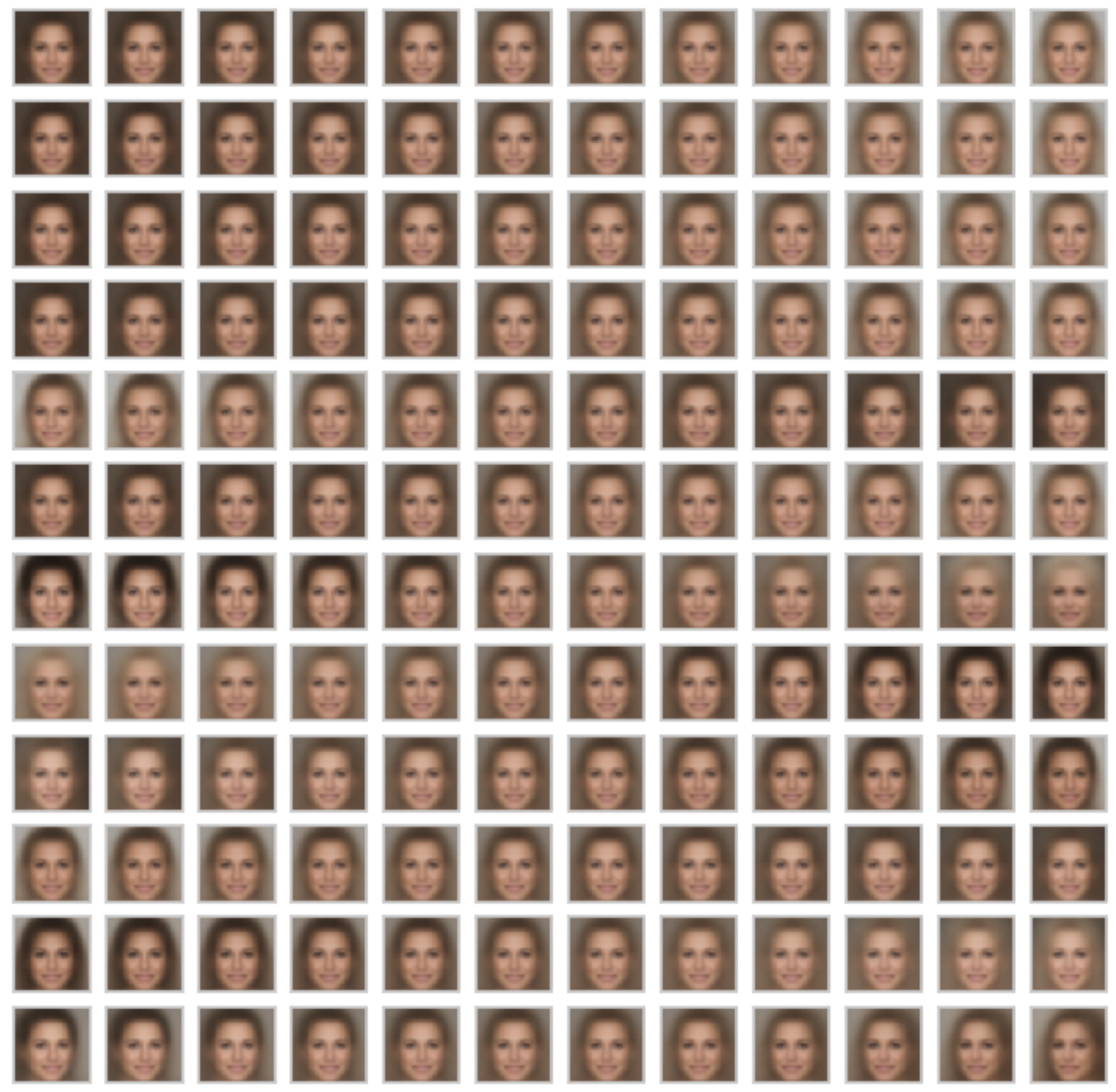}}%

    \vspace{-6pt}
    \caption{
         \textbf{Traversals over dimensions of highest variance} ($\beta\eq4$): Each row shows images generated as individual latent dimensions $z_i$ are varied with rows ordered by latent activity (See \Figref{fig:celeba_latent_info} caption). 
         For diagonal posteriors (left), traversals more clearly demonstrate disentanglement, i.e. identification of distinct semantic features with distinct latent dimensions, e.g.\ background shade (row 1), facial orientation (row 3), lighting (row 5), background colour (row 7). For full covariance posteriors, independent features are less clearly assigned to distinct latent dimensions, i.e.\ less disentangled.
         }
    \label{fig:celeba_traversals}
\end{figure*}

\begin{figure*}[h!]
    \vspace{-12pt}
    \centering
    \captionsetup[subfigure]{labelformat=empty,skip=0.05\baselineskip}
    \subfloat[Diagonal posteriors]{\includegraphics[width=0.48\linewidth, 
        trim={1 1 0 3},clip]{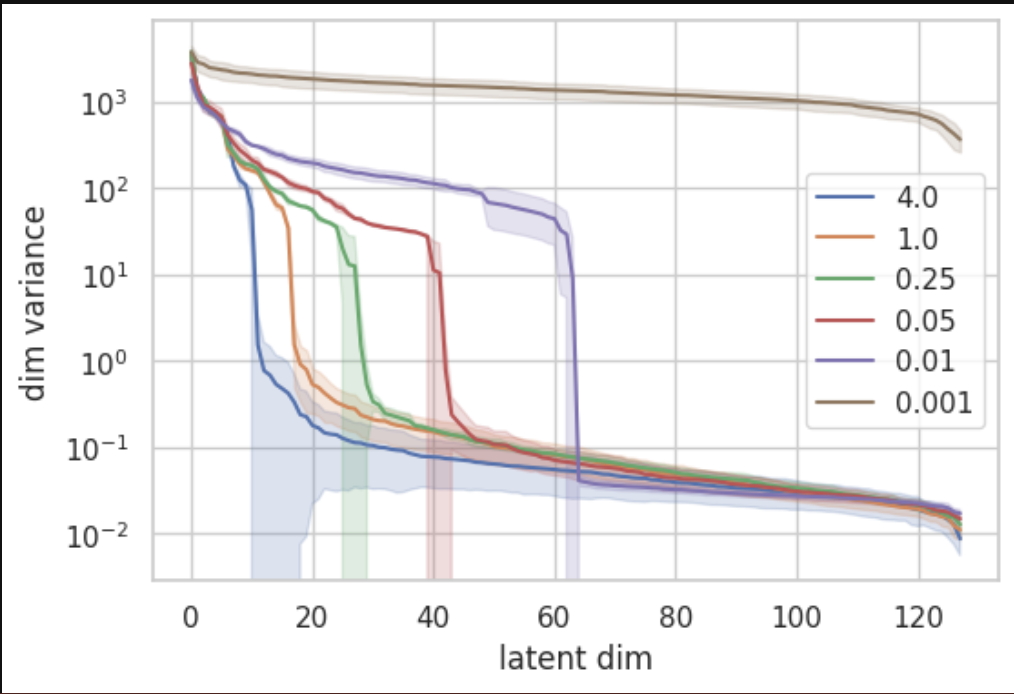}}%
    \hfill
    \subfloat[Full posteriors]{\includegraphics[width=0.48\linewidth,
        trim={0 0 1 0},clip]{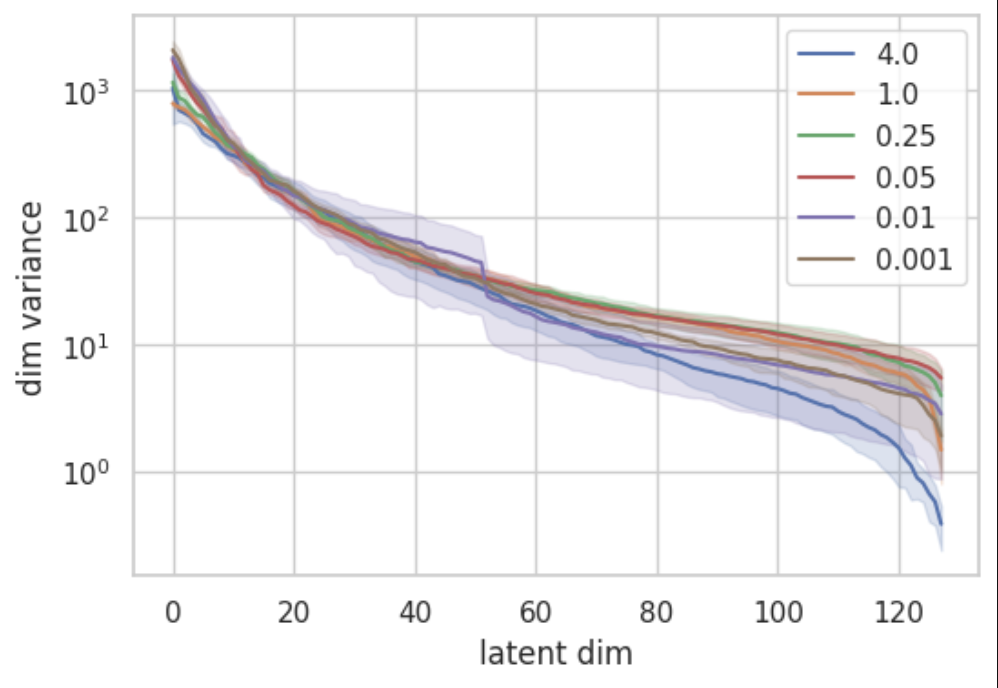}}%


    \vspace{-6pt}
    \caption{
         \textbf{Active latent dimension depend on  $\bm\beta$ (denoted by colour) and posterior covariance structure (left/right)}: 
         For a trained model and for each latent dimension $z_i$, the variance $x|z_i$ is estimated by taking equidistant traversals in latent space and computing the Euclidean distance between samples at each end of the traversal. 
         The plot shows the (estimated) variance, or \textit{latent activity},  per dimension ordered by magnitude (log scale, mean over 5 runs, standard deviation indicated by shaded areas).
         With diagonal covariances (left), there are relatively sharp cliff-edges implying that dimensions are (broadly) \textit{active} or \textit{inactive} and that axis-aligned directions are preferred. 
         The number of active dimensions increases as $\beta$ reduces and reconstructions get ``sharper'', requiring more information to be captured and thus greater capacity (i.e.\ latent dimensions). 
         For full posteriors (right), the distribution of variance over dimensions is much smoother and axis-aligned directions have no special status.
         }
    \label{fig:celeba_latent_info}
\end{figure*}

\begin{figure*}[h!]
    \vspace{-12pt}
    \hspace{-6.83cm}\small{$\beta$}\hfill\\
    \centering
        \small{4}
    \hfill
    \includegraphics[width=0.45\linewidth]{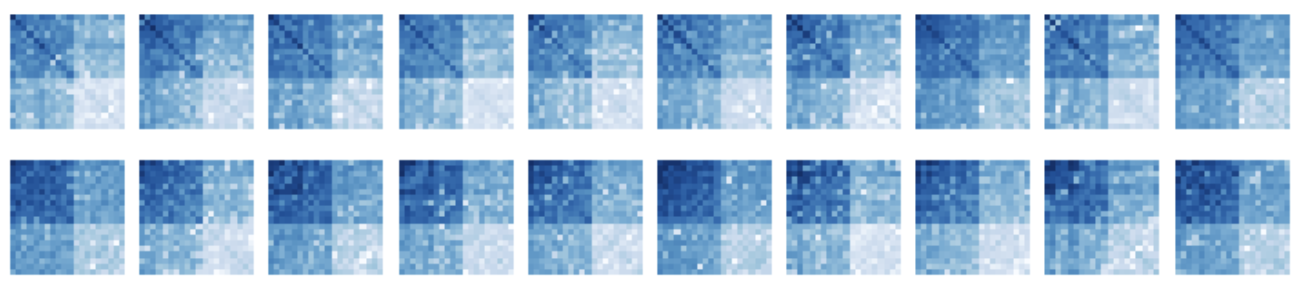}
    \hspace{4pt} 
    \includegraphics[width=0.45\linewidth]{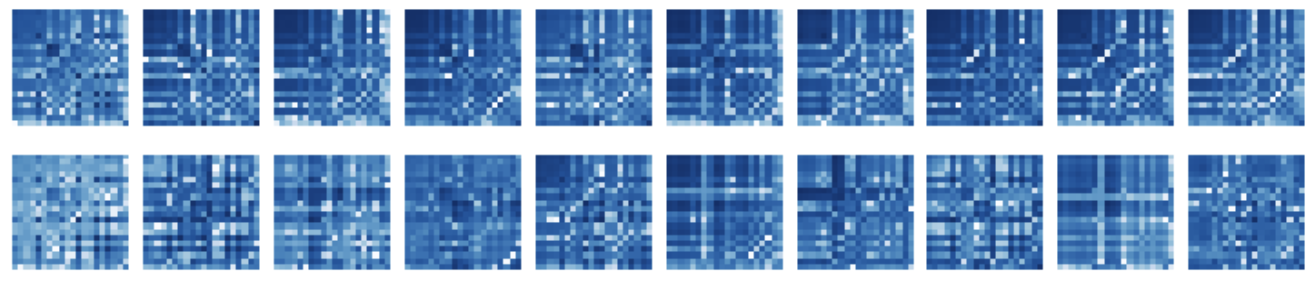}
    \\
    \vspace{4pt}
    \small{1}\hfill
    \includegraphics[width=0.45\linewidth]{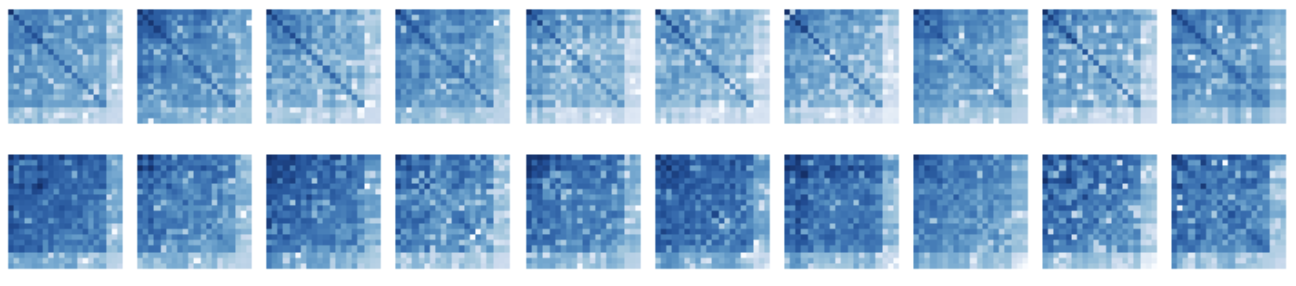}
    \hspace{4pt} 
    \includegraphics[width=0.45\linewidth]{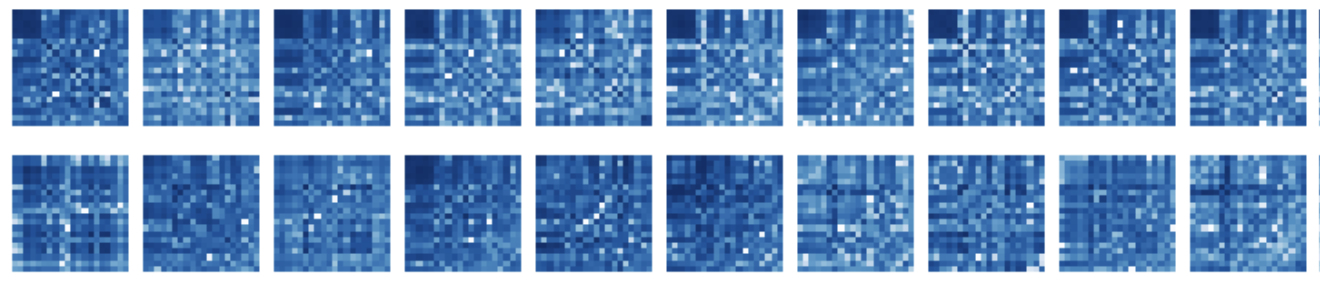}
    \\
    \vspace{4pt}
    \small{0.05}\hfill
    \subfloat[Diagonal posteriors]{
    \includegraphics[width=0.45\linewidth]{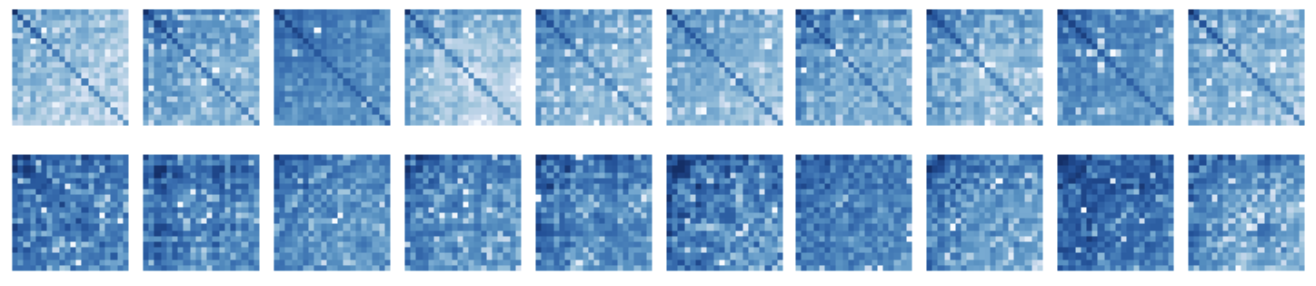}}%
    \hspace{4pt} 
    \subfloat[Full posteriors]{
    \includegraphics[width=0.45\linewidth]{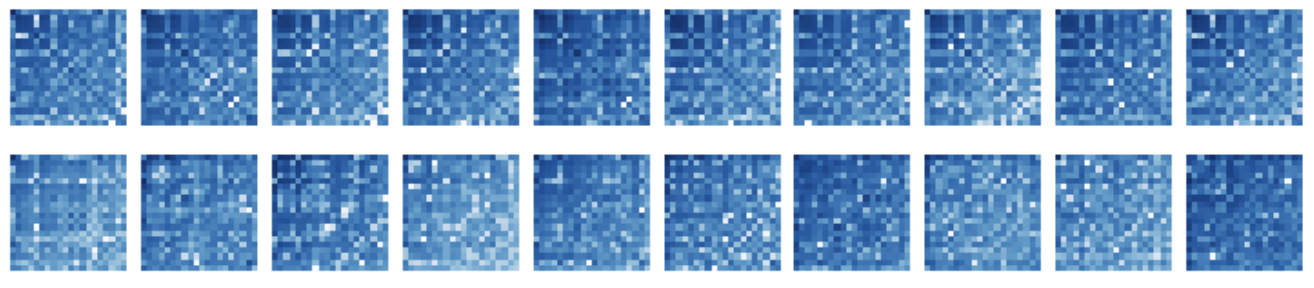}}%

    \vspace{-6pt}
    \captionsetup[subfigure]{labelformat=empty,skip=0.05\baselineskip}
    \caption{
         \textbf{Heatmaps of Derivative Terms in \Eqref{eq:real_relationship} for CelebA for Diagonal and Full Posterior Covariances}: 
         For each value of $\beta$ (indicated on left), there are two rows for $\mJ^\top\mJ$ (upper) and the directed Hessian (lower) over 10 random test samples. Colour intensity indicates log magnitude of matrix entries. For each heatmap, rows and columns correspond to a latent dimension $z_i$, ordered by latent activity (See \Figref{fig:celeba_latent_info} caption). We show the top 20 most active dimensions.
         For diagonal covariances (left), the active dimensions are visible as a darker block in the upper left, which grows as $\beta$ reduces (matching \Figref{fig:celeba_latent_info}), and diagonal structure is visible for active dimensions of $\mJ^\top\mJ$. Such structure is not visible for full covariances (right). The Hessians show less discernable structure and we suspect that such a higher-order derivative requires more samples to be well estimated for a complex distribution.
         }
    \label{fig:celeba_heatmaps}
    \vspace{-6pt}
\end{figure*}

\begin{figure*}[h!]
    \vspace{2pt}
    \centering
    \small{4}\hfill
    \includegraphics[width=0.9\linewidth]{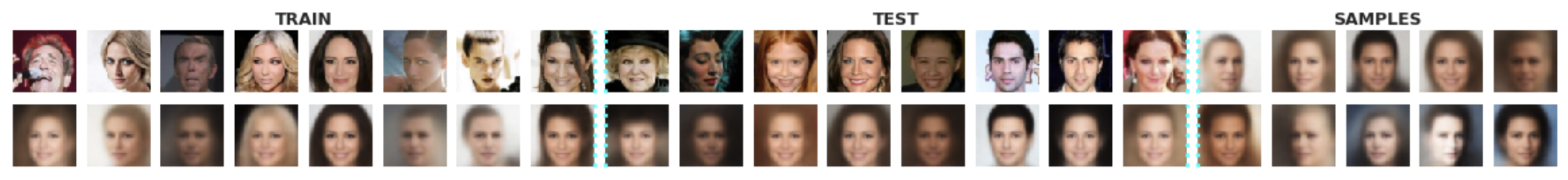}
    \\
    \small{1}\hfill
    \includegraphics[width=0.9\linewidth]{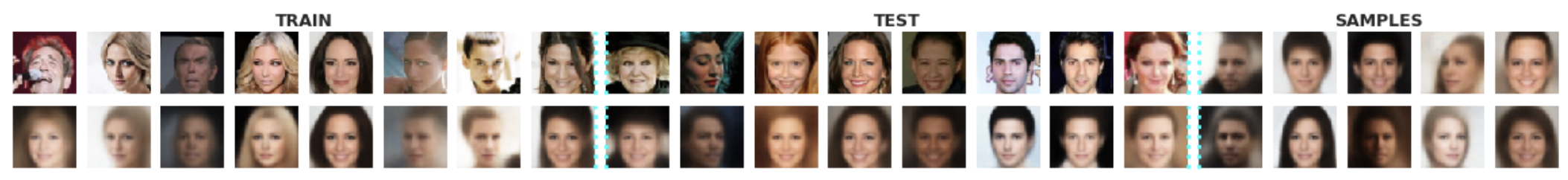}
    \\
    \small{0.05}\hfill
    \subfloat[Diagonal posteriors]{
    \includegraphics[width=0.9\linewidth]{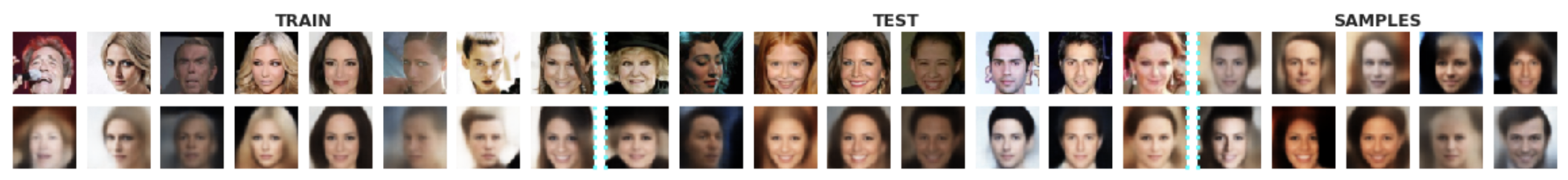}}%
    \\
    \small{4}\hfill
    \includegraphics[width=0.9\linewidth]{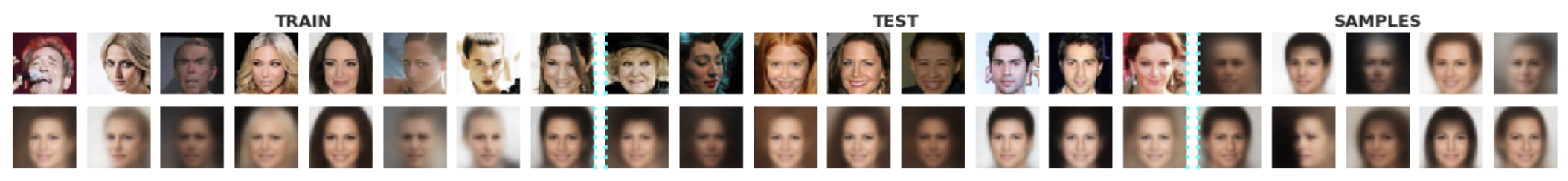}
    \\
    \small{1}\hfill
    \includegraphics[width=0.9\linewidth]{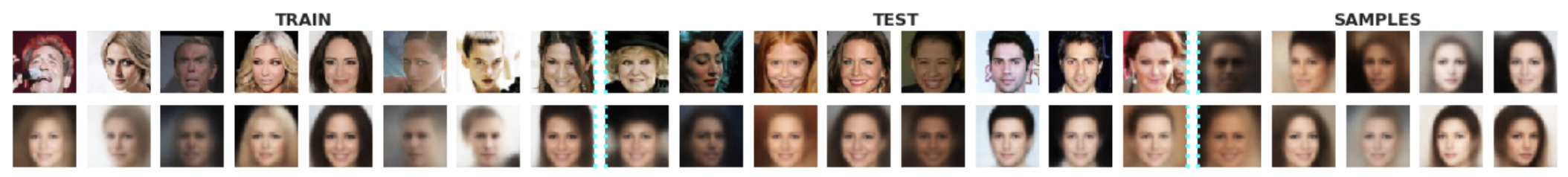}
    \\
    \small{0.05}\hfill
    \subfloat[Full posteriors]{
    \includegraphics[width=0.9\linewidth]{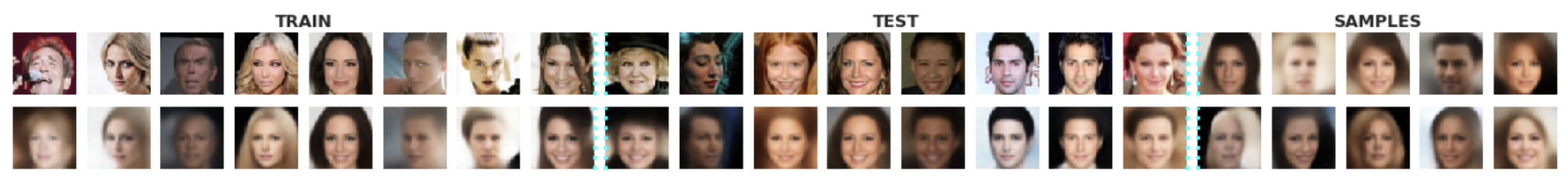}}%
    \vspace{-8pt}
    \captionsetup[subfigure]{labelformat=empty,skip=0.05\baselineskip}
    \caption{
         \textbf{Reconstructions and Samples for CelebA for a range of $\beta$ values and  Diagonal and Full Posterior Covariances}:
         For each value of $\beta$ (left), there are two rows in three sections: (left) train samples (upper) and reconstructions (lower); (mid) test samples (upper) and reconstructions (lower); and (right) samples (both rows). 
         As $\beta$ reduces, reconstruction quality and samples improve (i.e.\ blur reduces). 
         Reconstruction and sample quality is broadly comparable for diagonal and full covariances, indicating that the latent space is reoriented towards axis-alignment without necessarily impacting performance.
         }
    \label{fig:celeba_samples}
    \vspace{-14pt}
\end{figure*}

\paragraph{Summary:}
For \emph{diagonal} posteriors, we observe: (a) a small set of \emph{active} latents whose number increases as $\beta$ decreases; (b) stronger Jacobian orthogonality among active dimensions; and (c) reconstructions/samples of comparable quality to the full-covariance model across $\beta$ (Figures~\ref{fig:celeba_traversals}--\ref{fig:celeba_samples}).
These match the predictions of our theory and mirror the synthetic/dSprites trends, supporting on natural data the claim that diagonal posteriors drive C1--C2 in expectation.

\clearpage
\section{Reducing $\beta$ over training}\label{app:profile_results}
    \vspace{-58pt}

\begin{figure*}[h!]
    \vspace{-58pt}
    \centering
    \begin{minipage}{0.50\textwidth}
        \centering
        \captionsetup[subfigure]{skip=0.05\baselineskip}
        \subfloat[$\beta \eq1$: good disentanglement, blurry reconstructions. ]{\includegraphics[width=1\linewidth]{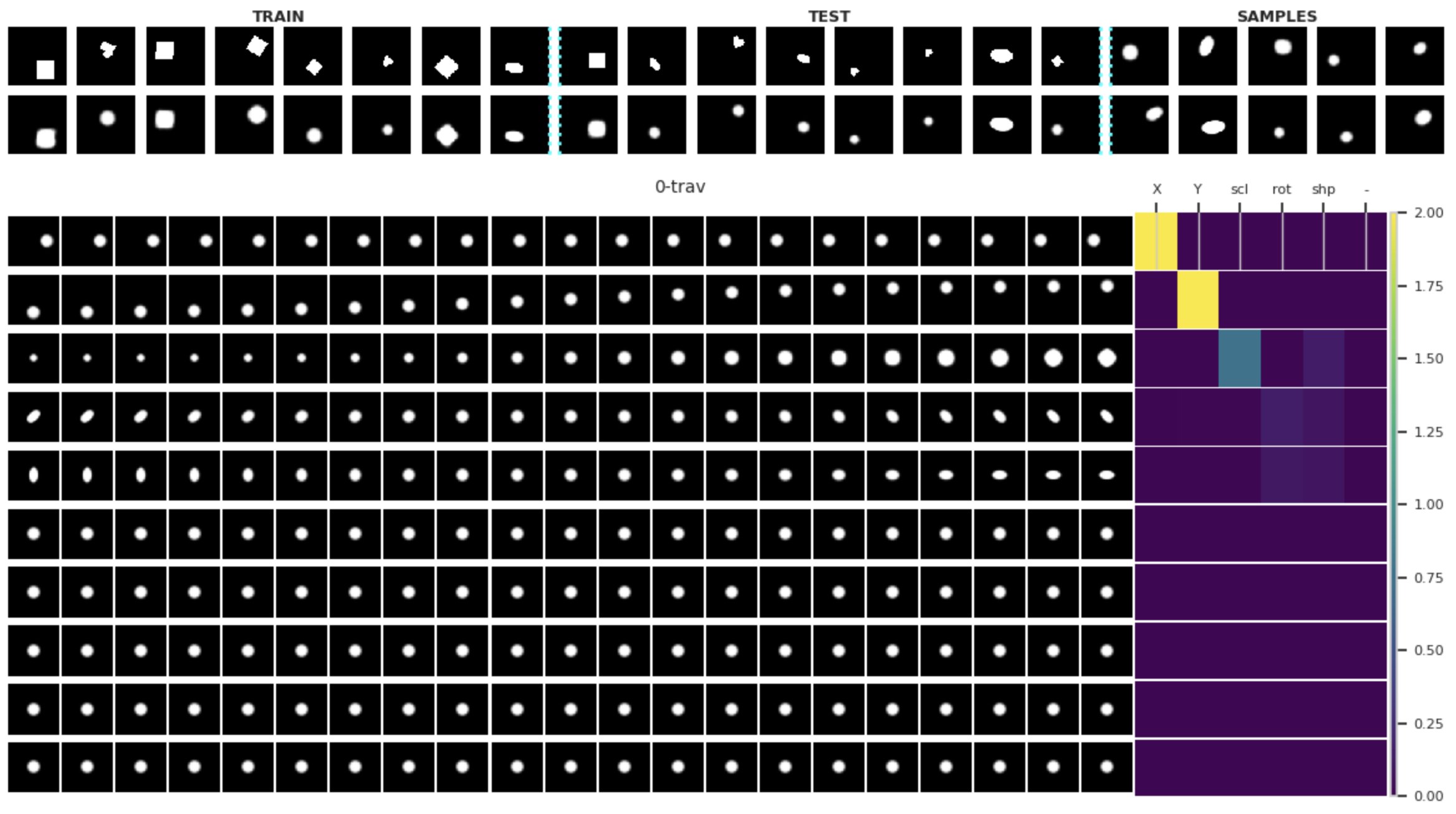}}%
        \vspace{6pt}
        \subfloat[$\beta \eq10^{\sminus3}$: no clear disentanglement, good reconstructions.]{\includegraphics[width=1\linewidth]{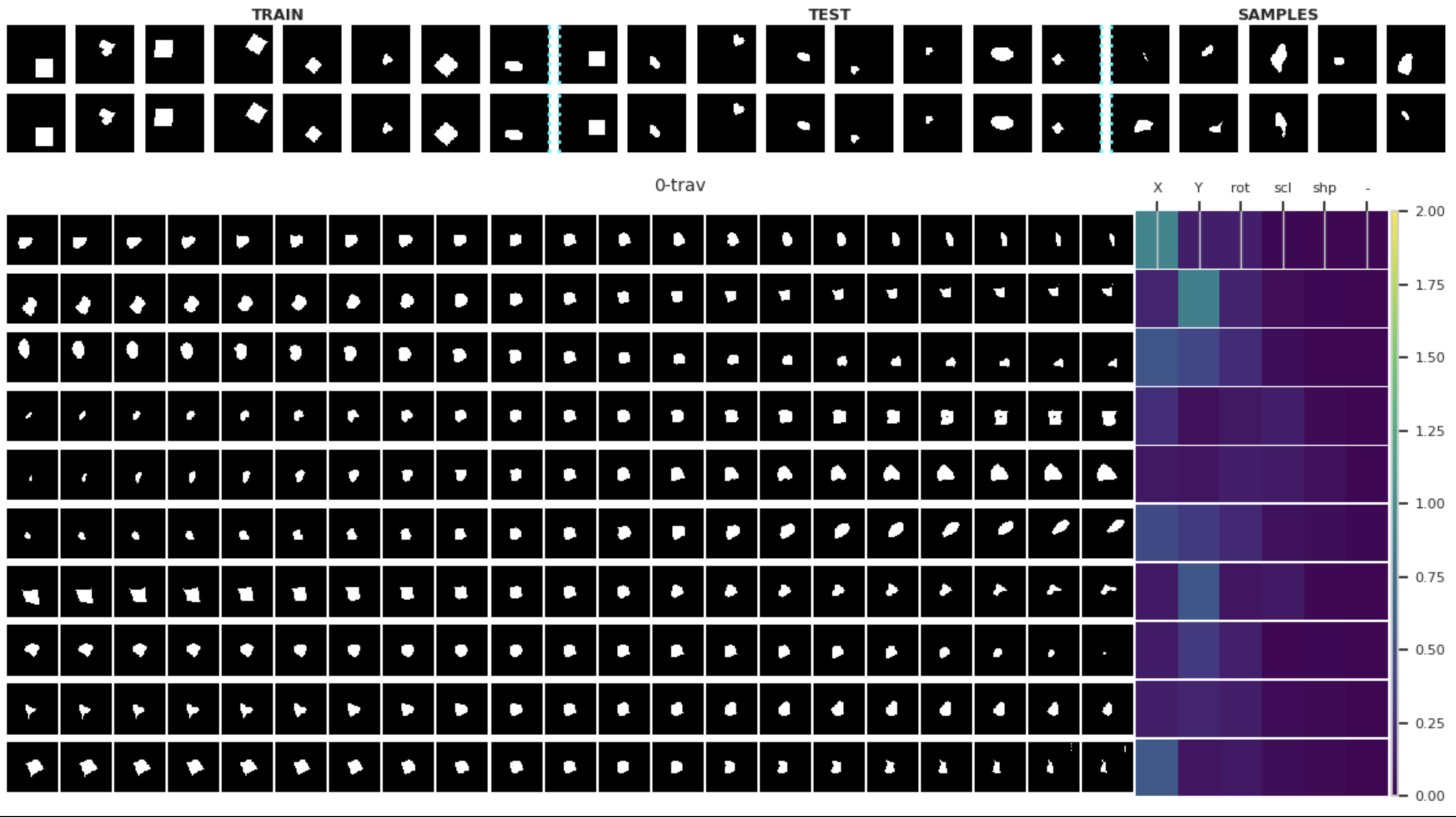}}%
        \vspace{6pt}
        \subfloat[$\beta \eq1$: good disentanglement, good reconstructions. ]{\includegraphics[width=1\linewidth]{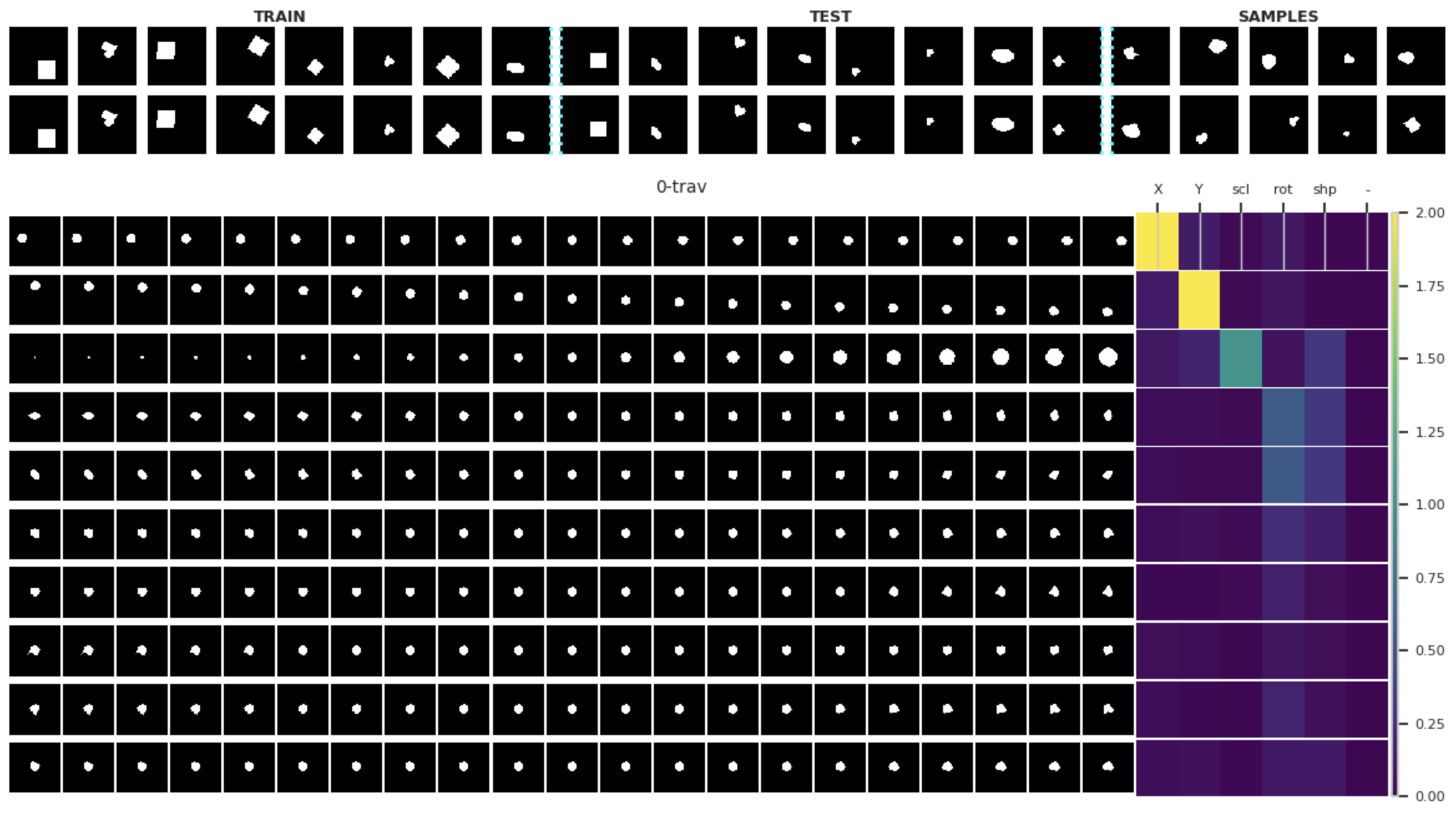}}%
        \vspace{-2pt}
        
        \hspace{-10pt}
        \subfloat[Traversals from a random test sample]{\includegraphics[width=0.96\linewidth]{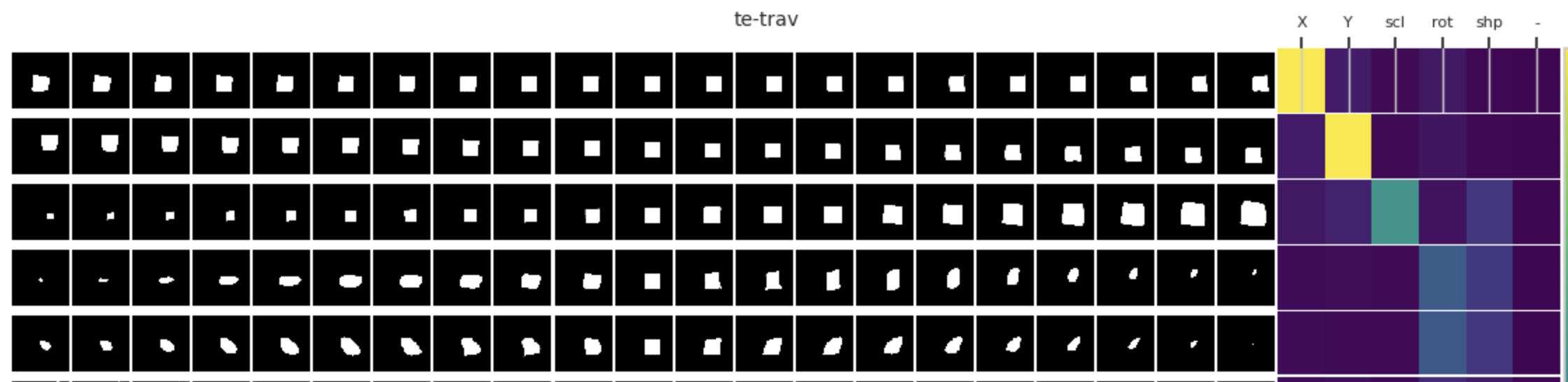}
        \hfill}%
    \end{minipage}

    \vspace{-2pt}
    \caption{
         \textbf{Testing the $\beta$-hypothesis:} (top) high $\beta$ (1) gives best disentanglement (see heatmap) but blurry images (see top rows); (mid) low $\beta$ (0.001) gives poor disentanglement but good reconstructions; (bottom) lowering $\beta$ over training ($1\cl\to0.001$) gives good disentanglement (see heatmap) and good reconstructions.
     }
    \label{fig:traversals}
\end{figure*}

\clearpage
\section{Material Errors in \cite{reizinger2022embrace}}
\label{app:errors}

We note what appear to be several fundamental mathematical errors in the proof of Theorem 1 in \cite{reizinger2022embrace} rendering it invalid. Theorem 1 claims an approximation to the exact relationship given in \Eqref{eq:real_relationship}

\begin{enumerate}[leftmargin=0.5cm]
    \item p.33, after ``triangle inequality'': \  
        $\vline\, \E\big[\|a\|^2 \cl- \|b\|^2\big ]\,\vline 
            \, \leq\, \E \big[ \|a\cl-b\|^2 \big]$, 
            where $a \eq x \cl- f,\ \  b \eq \sminus\sum \pderiv{f}{z_k}...$
        \begin{itemize}[leftmargin=0.5cm]
            \item (dropping expectations for clarity) this has the form $\vline\, \|a\|^2 - \|b\|^2\,\vline  \ \leq\  \|a-b\|^2$  \qquad (*)
            \item true triangle inequality:\quad  
                $\vline\, \|a\|\cl-\|b\|\,\vline  \, \leq\,  \|a-b\| 
                    \ \implies 
                    |\|a\|\cl-\|b\||^2 \ \leq\  \|a - b\|^2$ (by squaring)
                \begin{itemize}
                    \item this differs to (*) since norms are squared inside the absolute operator on the L.H.S.
                \end{itemize}
            \item counter-example to (*):\quad
                $b\eq x>0,\ \ a\eq x\cl+1
                    \ \implies\ 
                    \vline \|a\|^2 - \|b\|^2\,\vline = |2x+1| \ >\ 1=
                    \|a-b\|^2$
        
        \end{itemize}
        
    \item next step, p.33: \quad
        $\E\big[  \|(c\cl-e) - (d\cl-e)\|^2\big]\ \leq\ \E\big[\|c\cl- e\|^2 + \|d\cl-e\|^2\big]
            \qquad$where $c\eq x,\ d\eq f(z) - \sum\pderiv{f}{z_k}...,\ 
                e\eq f(\mu)$
        \begin{itemize}[leftmargin=0.5cm]
            \item this has the form of the standard triangle inequality $\|a\cl- b\|\ \leq\ \|a\|\cl+\|b\|$ except all norms are squared.
            \item squaring both sides of the triangle inequality gives an additional cross term on the right that the used inequality omits, without which the inequality does not hold in general.
        \end{itemize}

    \item first step, p.34: drops the K term, which bounds the decoder Hessian and higher derivatives (in earlier Taylor expansion)
        \begin{itemize}[leftmargin=0.5cm]
            \item this omission is similar to a step in \cite{kumar2020implicit} but is not stated, e.g.\ in Assumption 1.
            \item since K is unbounded, any conclusion omitting it without justification is not valid in general.
        \end{itemize}
\end{enumerate}

\end{document}